\newtheorem{theorem}{Theorem}
\newtheorem{lemma}{Lemma}
\newtheorem{corollary}{Corollary}
\newtheorem{remark}{Remark}
\renewcommand{\epsilon}{\varepsilon}
\renewcommand{\tilde}{\widetilde}
\newcommand{\reals}{\mathbb{R}}
\newcommand{\E}{\mathbb{E}}
\renewcommand{\Pr}{\mathbb{P}}
\newcommand{\bw}{\mathbf{w}}
\newcommand{\bu}{\mathbf{u}}
\newcommand{\bv}{\mathbf{v}}
\newcommand{\bc}{\mathbf{c}}
\newcommand{\Ocal}{\mathcal{O}}
\newcommand{\Acal}{\mathcal{A}}
\newcommand{\Ccal}{\mathcal{C}}
\newcommand{\Wcal}{\mathcal{W}}
\newcommand{\loss}{\mathcal{L}}
\newcommand{\norm}[1]{\|#1\|}
\newcommand{\inner}[1]{\langle#1\rangle}
\newcommand{\secref}[1]{Sec.~\ref{#1}}
\renewcommand{\eqref}[1]{Eq.~(\ref{#1})}
\newcommand{\lemref}[1]{Lemma~\ref{#1}}
\newcommand{\thmref}[1]{Thm.~\ref{#1}}
\newcommand{\bone}{\boldsymbol{1}}
\newcommand{\bzero}{\boldsymbol{0}}
\renewcommand{\loss}{\ell}
\newcommand{\wh}{\widehat}
\newcommand{\hloss}{\wh{\loss}}
\newcommand{\tloss}{\widetilde{\loss}}
\newcommand{\bloss}{\boldsymbol{\ell}}
\newcommand{\tbloss}{\widetilde{\boldsymbol{\ell}}}
\title{\bf Bandit Regret Scaling with the Effective Loss Range}
\author{
  Nicol\`{o} Cesa-Bianchi \\
  Dipartimento di Informatica\\
  Università degli Studi di Milano\\
  Milano 20133, Italy \\
  \texttt{nicolo.cesa-bianchi@unimi.it} \\
  \and
  Ohad Shamir \\
  Department of Computer Science \\
  Weizmann Institute of Science \\
  Rehovot 7610001, Israel\\
  \texttt{ohad.shamir@weizmann.ac.il} \\
}
\date{}
\begin{document}

\maketitle

\bigskip
\begin{center}
	\relax
		\begin{tabular}{|c|}
			\hline
			~\parbox{0.9\linewidth}{\hfil\large\bf Erratum
			}~\\[1.5ex]
			~\parbox{0.9\linewidth}{\relax
				The results of \secref{sec:laplace}, except \thmref{thm:laplow}, are incorrect as stated, due to a crucial bug in the proof of \thmref{th:upper} (in particular, the claim in \lemref{l:minor} is not correct). However, the following modification of \thmref{th:upper} is still correct:
				\begin{theorem}\label{th:upper_fixed}
					Assume that in each round $t$, after choosing $I_t$ the learner is told a number $a_t\geq 0$ such that $\min_i \loss_t(i)\geq a_t$. Then Exp3 performing updates based on loss vectors 
					$\tbloss_t = \bloss_t -a_t \bone$ achieves
					\begin{align*}
					\E\left[\sum_{t=1}^{T} \loss_t(I_t)\right] - \min_{i=1,\dots,K}\sum_{t=1}^{T} \loss_t(i)  
					\le
					\frac{\log K}{\eta} + \frac{\eta}{2} \sum_{t=1}^{T}\sum_{i=1}^{K}(\ell_t(i)-a_t)^2~.
					\end{align*}
				\end{theorem}
			    The theorem is based on a simple shifting argument, and is an immediate corollary of \lemref{lem:exp3}
			    and the argument leading to \eqref{eq:convprog} in the proof of \thmref{th:upper}. Compared to \thmref{th:upper}, we: (1) Need to make the different assumption that we are told a lower bound on the losses (e.g., the smallest loss), as opposed to any loss; (2) Define $\tbloss_t$ a bit differently; and (3) The regret bound now depends on the variation of the losses through $\sum_i (\ell_t(i)-a_t)^2$ rather than through a Laplacian-based bound. Thus, we still get a regret bound which is always better than the standard Exp3 regret bounds, improves as the per-round losses become more similar, and becomes zero if all the losses are exactly the same (as it should be expected). Finally, \thmref{thm:laplow}, which provides a Laplacian-based lower bound, is still correct but is no longer matched by a similar upper bound. It remains open whether such an upper bound exists for a bandit setting similar to the one considered here.

\smallskip\noindent
Thanks to \'{E}tienne de Montbrun for finding the problem in \lemref{l:minor}.
			    \\[0.3ex] }~\\ \hline
	\end{tabular}
\end{center}

\newpage

\begin{abstract}
We study how the regret guarantees of nonstochastic multi-armed bandits can be 
improved, if the effective range of the losses in each round is small (e.g. the 
maximal difference between two losses in a given round). Despite a recent 
impossibility result, we show how this can be made possible under certain mild 
additional assumptions, such as availability of rough estimates of the losses,
or advance knowledge of the loss of a single, possibly unspecified arm. Along 
the way, we develop a novel technique which might be of independent interest, 
to convert any multi-armed bandit algorithm with regret depending on the loss 
range, to an algorithm with regret depending only on the effective range, while 
avoiding predictably bad arms altogether.
\end{abstract}

\section{Introduction}

In the online learning and bandit literature, a recent and 
important trend has been the development of algorithms which are capable of 
exploiting ``easy'' data, in the sense of improved regret guarantees if the 
losses presented to the learner have certain favorable patterns. For example, a 
series of works have studied how the regret can be improved if the losses do 
not change much across rounds (e.g., \cite{chiang2012online,hazan2010extracting,hazan2011better,NIPS2016_6341,steinhardt2014adaptivity});
 being simultaneously competitive w.r.t.\ both ``hard'' and ``easy'' data (e.g.,
\cite{seldin2014one,sani2014exploiting,auer2016algorithm,bubeck2012best}); 
attain richer feedback on the losses (e.g., \cite{alon2014nonstochastic}),  
have some predictable structure \cite{rakhlin2013online}, and so on.
In this paper, we continue this research agenda in a different direction, focusing on improved regret performance in nonstochastic settings with partial feedback where the learner has some knowledge about the variability of the losses \emph{within} each round.

In the full information setting, where the learner observes the entire set of 
losses $\loss_t(1),\dots,\loss_t(K)$ after each round $t$, it is possible to 
obtain regret bounds of order $\epsilon\sqrt{T\log K}$ scaling with the unknown 
\emph{effective range} $\epsilon = \max_{t,i,j} |\loss_t(i)-\loss_t(j)|$ of the 
losses~\cite[Corollary~1]{cesa2007improved}. Unfortunately, the situation in 
the bandit setting, where the learner only observes the loss of the chosen 
action, is quite different. A recent surprising 
result~\cite[Corollary~4]{gertor16} implies that in the bandit setting, the 
standard $\Omega(\sqrt{KT})$ regret lower bound holds, even when $\epsilon = 
\Ocal(\sqrt{K/T})$. The proof defines a process where losses are kept 
$\epsilon$-close to each other, but where the values oscillate unpredictably 
between rounds. Based on this, one may think that it is impossible to attain 
improved bounds in the bandit setting which depend on $\epsilon$, or some other 
measure of variability of the losses across arms. In this paper, we show the 
extent to which partial information about the losses allows one to circumvent 
the impossibility result in some interesting ways. We analyze two specific 
settings: one in which the learner can roughly estimate in advance the actual 
loss value of each arm, and one where she knows the exact loss of some 
arbitrary and unknown arm.  

In order to motivate the first setting, consider a scenario where the learner knows each arm's loss up to a certain precision (which may be different for each arm). For example, in the context of stock prices~\cite{hazan2009stochastic,abernethy2013hedge} the learner may have a stochastic model providing some estimates of the loss means for the next round. 
In other cases, the learner may be able to predict that certain arms are going 
to perform poorly in some rounds. For example, in routing the learner may know 
in advance that some route is down, and a large loss is incurred if that route 
is picked. Note that in this scenario, a reasonable algorithm should be able to 
avoid picking that route. However, that breaks the regret guarantees of 
standard expert/bandit 
algorithms, which typically require each arm to be chosen with some positive 
probability. In the resulting regret bounds, it is difficult to avoid at 
least some dependence on the highest loss values. 

To formalize these scenarios and considerations, we study a setting where 
for each arm $i$ at 
round $t$, the learner is told that the loss will be in 
$[m_t(i)-\epsilon_t(i),m_t(i)+\epsilon_t(i)]$ for some $m_t(i),\epsilon_t(i)$. 
In 
this setting, we show a generic reduction, which allows one to convert any 
algorithm for bounded losses, under a generic feedback model (not necessarily 
a bandit one) to an algorithm with regret depending only on the effective range 
of 
the losses (that is, only on $\epsilon_t(i)$, independent of $m_t(i)$).
%This can be useful, for instance, if we can estimate in advance the mean and deviation of the losses, and want to leverage it to improve regret performance, or if some arms are occasionally bad (in a foreseeable manner), and we wish to avoid them without breaking the regret guarantees of standard algorithms.
Concretely, taking the simple case where the loss of each arm $i$ at each round $t$ is in 
$[m_t(i)-\epsilon,m_t(i)+\epsilon]$ for some $m_t(i)$ and fixed $\epsilon$, and 
assuming the step 
size is properly chosen, we can get a regret bound of
$
\tilde{\Ocal}\big(\epsilon\sqrt{KT}\big)
$
for the bandit feedback, completely independent of $m_t(i)$ 
and the losses' actual range. Note that this has the desired behavior that as 
$\epsilon\rightarrow 0$, the regret also converges to zero (in the extreme case 
where $\epsilon=0$, the learner essentially knows the losses in advance, and 
hence can avoid any regret). With full information feedback (where the entire 
loss vector is revealed at the end of each round), we can use the same 
technique to recover the regret bound of 
$
\Ocal\big(\epsilon\sqrt{T\log K}\big)
$.
We note that this is a special case of the predictable sequences 
setting studied in~\cite{rakhlin2013online}, and their 
proposed algorithm and analysis 
is applicable here. However, comparing the results, our bandit regret bounds 
have a better dependence on the number of arms $K$, and our reduction can be 
applied to any algorithm, rather than the specific one proposed in  
\cite{rakhlin2013online}. On the flip 
side, the algorithm proposed in \cite{rakhlin2013online} is tailored to the 
more general setting of bandit linear optimization, and does not require the 
range parameter $\epsilon$ to be known in advance (see \secref{sec:meps} for a 
more detailed comparison). We also study the tightness of our regret guarantees 
by providing lower bounds.

A second scenario motivating partial knowledge about the loss vectors is the following. Consider a system for recommending products to visitors of some company's website. Say that two products are similar if the typical visitor tends to like them both or dislike them both. Hence, if we consider the similarity graph over the set of products, then it is plausible to assume that the likelihood of purchase (or any related index of the visitor's behavior) be a smooth function over this graph. Formally, the loss vectors $\boldsymbol{\ell}_t$ at each round $t$ satisfy $\boldsymbol{\ell}_t^\top L_t \boldsymbol{\ell}=\sum_{(i,j)\in E_t}(\ell_t(i)-\ell_t(j))^2\leq C_t^2$, where 
$L_t$ is the Laplacian matrix associated with a graph over the arms with edges 
$E_t$, and $C_t\geq 0$ is a smoothness parameter. In this setting, we provide 
improved bandit regret bounds depending on the spectral properties of the 
Laplacian. To circumvent the impossibility result of~\cite{gertor16} mentioned 
earlier, we make the reasonable assumption that at the end of each round 
round, the learner is given an ``anchor point'', corresponding to the loss of 
some unspecified arm. In our motivating example, the recommender system may 
assume, for instance, that each visitor has some product that she most likely 
won't buy. Using a simple modification of the Exp3 algorithm, we show that if 
the parameters are properly tuned, we attain a regret bound of order
$
\sqrt{\min\left\{KT,\sum_{t=1}^{T}\left(1+\frac{C_t^2}{\lambda_2(L_t)}
\right)\right\}}
$
(ignoring log factors),
where $\lambda_2(L_t)\in (0,K]$ is the second-smallest eigenvalue of $L_t$, also known as 
the algebraic connectivity number of the graph represented by $L_t$. If the 
learner is told the minimal loss at every round (rather than any loss), this 
bound can be improved to order of
$
\sqrt{\min\left\{KT,\sum_{t=1}^{T}\frac{C_t^2}{\lambda_2(L_t)}
\right\}}
$
(again, ignoring log factors) which vanishes, as it should, when $C_t = 0$ for all $t$; that is, when all arms share the same loss value.
We also provide a lower bound, showing that this upper bound is the best 
possible (up to log factors) in the worst case. Although our basic results pertain to 
connected graphs, using the range-dependent reductions discussed earlier we show it can be applied to graphs with multiple connected components and anchor points. 

The paper is structured as follows: In \secref{sec:setting}, we formally define 
the standard experts/bandit online learning setting, which is the focus of our 
paper, and devote a few words to the notation we use. In \secref{sec:meps}, we 
discuss the situation where each individual loss is known to lie in a certain 
range, and provide an algorithm as well as upper and lower bounds on the 
expected regret. In \secref{sec:laplace}, we consider the setting of smooth losses (as defined above).
%(with Appendix \ref{app:multcon} discussing how these can be extended using the results in \secref{sec:laplace}).
All our formal proofs are presented in the appendices. 

\section{Setting and notation}\label{sec:setting}
The standard experts/bandit learning setting (with 
nonstochastic losses) is phrased as a 
repeated game between a learner and an adversary, defined over a fixed set of 
$K$ arms/actions. Before the game begins, the adversary assigns losses for 
each of $K$ arms and each of $T$ rounds (this is also known as an oblivious 
adversary, as opposed to a nonoblivious one which sets the losses during the 
game's progress). The loss of arm $i$ at round $t$ is 
defined as $\ell_t(i)$, and is assumed w.l.o.g.\ to lie in $[0,1]$. We let 
$\boldsymbol{\ell}_t$ denote the vector $(\ell_t(1),\ldots,\ell_t(K))$. At the 
beginning of each round, the learner chooses an arm $I_t\in \{1,\ldots,K\}$, 
and receives the associated loss $\ell_t(i)$. With bandit feedback, the 
learner then observes only her own loss $\ell_t(I_t)$, whereas with 
full information feedback, the learner gets to observe $\ell_t(i)$ for all $i$. 
The learner's goal is to minimize the expected regret (sometimes denoted as 
pseudo-regret), defined as
\[
\E\left[\sum_{t=1}^{T}\ell_t(I_t)\right]-\min_{i=1,\dots,K} 
\sum_{t=1}^{T}\ell_t(i)~,
\]
where the expectation is over the learner's possible randomness. We use 
$\mathbf{1}_A$ to denote the indicator of the event $A$, and let $\log$ 
denote the natural logarithm. Given an (undirected) graph over $K$ nodes, its 
Laplacian $L$ is defined as the $K\times K$ matrix where $L_{i,i}$ equals the 
degree of node $i$, $L_{i,j}$ for $i\neq j$ equals $-1$ if node $i$ is adjacent 
to node $j$, and $0$ otherwise. We let $\lambda_2(L)$ denote the 
second-smallest eigenvalue of $L$. This is also known as the algebraic connectivity 
number, and is larger the more well-connected is the graph. In particular, 
$\lambda_2(L)=0$ for disconnected graphs, and $\lambda_2(L)=K$ for the complete 
graph.

\section{Rough estimates of individual losses}\label{sec:meps}
We consider a variant of the online learning setting presented 
in \secref{sec:setting}, where at the beginning of every round $t$, the learner 
is provided with additional side information in the form of 
$\{m_{t}(i),\epsilon_{t}(i)\}_{i=1}^{K}$, with the guarantee that 
$|\ell_{t}(i)-m_{t}(i)|\leq \epsilon_t(i)$ for all $i=1,\ldots,K$. 
%
%Standard algorithms for this problem, which do not utilize this side-information, 
%However, this has a few disadvantages:
%
We then propose an algorithmic reduction, which allows to convert 
any regret-minimizing algorithm $\Acal$ (with some generic feedback), to an 
algorithm with regret depending on 
$\epsilon_t(i)$, independent of $m_t(i)$. We assume that given a loss vector 
$\bloss_t$ and chosen action $I_t$, the algorithm $\Acal$ receives as 
feedback some function $f_t\big(\bloss_t,I_t\big)$: For example, if $\Acal$ is 
an algorithm for the multi-armed bandits setting, then
$f_t\big(\bloss_t,I_t\big)=\bloss_t(I_t)$, whereas if $\Acal$ is an algorithm 
for the experts setting, $f_t\big(\bloss_t,I_t\big)=\bloss_t$. In our 
reduction, $\Acal$ is sequentially fed, at the end of each round $t$, with  
$f_t\big(\tilde{\bloss}_t,\tilde{I}_t\big)$ (where $\tilde{\bloss}_t$ and 
$\tilde{I}_t$ are not necessarily the same as the actual loss vector $\bloss_t$ 
and actual chosen arm $I_t$), and returns a recommended arm $\tilde{I}_{t+1}$ 
for the next round, which is used to choose the actual arm $I_{t+1}$.

To formally describe the reduction, we need a couple of definitions. For 
all $t$, let 
\[
j_t\in \arg\min_i \{m_t(i)-\epsilon_t(i)\}
\]
denote the arm with the lowest potential loss, based on the provided 
side-information (if there are ties, we
choose the one with smallest $\epsilon_t(i)$, and break any remaining ties 
arbitrarily). Define any arm $i$ as ``bad'' (at round $t$) if
	$
	m_{t}(i)-\epsilon_t(i) > m_{t}(j_t)+\epsilon_t(j_t)
	$
	and ``good'' if
	$
	m_{t}(i)-\epsilon_t(i) \leq m_{t}(j_t)+\epsilon_t(j_t)
	$.
	Intuitively, ``bad'' arms are those which cannot possibly have the smallest loss in round $t$. 
For loss vector $\bloss_t$, define the transformed loss vector $\tbloss_t$ as
	\[
	\tilde{\ell}_t(i)=\begin{cases}\ell_t(i)-m_t(j_t)+\epsilon_t(j_t)&\text{if $i$ is good}\\
	2\,\epsilon_t(j_t)&\text{if $i$ is bad.}\end{cases}
	\]
	It is easily verified that $\tilde{\ell}_t(i)\in 
	\big[0,2(\epsilon_t(i)+\epsilon_t(j_t))\big]$ always. Hence, the range of the 
	transformed losses does not depend on $m_t(i)$.
The meta-algorithm now does the following at every round:
\begin{enumerate}%[topsep=0pt,parsep=0pt,itemsep=0pt]
	\item Get an arm recommendation $\tilde{I}_t$ from $\Acal$.
	\item Let $I_t=\tilde{I}_t$ if $\tilde{I}_t$ is a good arm, and $I_t=j_t$ otherwise.
	\item Choose arm $I_t$ and get feedback $f_t\big(\bloss_t,I_t\big)$
	\item Construct feedback $f_t\big(\tbloss_t,\tilde{I}_t\big)$ and feed to algorithm $\Acal$
\end{enumerate}
Crucially, note that we assume that 
$f_t\big(\tbloss_t,\tilde{I}_t\big)$ can be constructed based on 
$f_t\big(\bloss_t,I_t\big)$. For example, this is certainly true in the 
full information setting (as we are given $\bloss_t$, hence can 
explicitly compute $\tbloss_t$). This is also true in the 
bandit setting: If $\tilde{I}_t$ is a ``good'' arm, then $I_t=\tilde{I}_t$, 
hence we can construct 
$\tilde{\ell}_t(\tilde{I}_t)=\ell_t(I_t)-m_t(j_t)+\epsilon_t(j_t)$ 
based on the feedback $\ell_t(I_t)$ actually given to the meta-algorithm. If 
$\tilde{I}_t$ is a ``bad'' arm, then we can indeed construct 
$\tilde{\ell}_t(\tilde{I}_t) = 2\,\epsilon_t(j_t)$, since $\epsilon_t(j_t)$ is 
given to the meta-algorithm as side-information. This framework can potentially 
be used for other partial-feedback settings as well. 

The following key theorem implies that the expected regret of this 
meta-algorithm can be upper bounded by the expected regret of $\mathcal{A}$, 
with respect to the \emph{transformed} losses $\tbloss_t$ (whose range is 
independent of $m_t(i)$):
\begin{theorem}\label{thm:reduc}
	Suppose (without loss of generality) that $\tilde{I}_t$ given by $\Acal$ is 
	chosen at random by sampling from a probability distribution 
	$\tilde{p}_t(1),\ldots,\tilde{p}_t(K)$. Let $p_t(1),\ldots,p_t(K)$ be the 
	induced distribution\footnote{By definition of the meta-algorithm, we have 
	$p_t(i)=\tilde{p}_t(i)$ if $i\neq 
	j_t$ is good, $p_t(i)=0$ if $i$ is bad, and 
	$p_t(j_t)=\tilde{p}_t(j_t)+\sum_{\text{$i$ is bad}}\tilde{p}_{t}(i)$.} of $I_t$. 
	Then for any fixed arm $a\in \{1,\ldots,K\}$, it holds that
	\begin{equation}\label{eq:reduc}
	\sum_{t=1}^{T} \sum_{i=1}^{K} p_t(i)\ell_t(i)-\sum_{t=1}^{T} \ell_t(a)~\leq~\sum_{t=1}^{T}\sum_{i=1}^{K}\tilde{p}_t(i)\tilde{\ell}_t(i)-\sum_{t=1}^{T} \tilde{\ell}_t(a)~.
	\end{equation}
	This implies in particular that
	\[
	\E\left[\sum_{t=1}^{T}\ell_t(I_t)\right]-\sum_{t=1}^{T}\ell_t(a)~\leq~\E\left[\sum_{t=1}^{T}\Big(\tilde{\ell}_t(\tilde{I}_t)-\tilde{\ell}_t(a)\Big)\right]
	\]
	where the expectation is over the possible randomness of the algorithm $\Acal$.
	Moreover, $\tilde{\ell}_t(i)\in \big[0,2(\epsilon_t(j_t)+\epsilon_t(i))\big]$ for any good $i$, and $\tilde{\ell}_t(i)=2\,\epsilon_t(j_t)$ for any bad $i$.
\end{theorem}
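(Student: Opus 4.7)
The plan is to prove the pointwise inequality~\eqref{eq:reduc} round by round: if I establish that $\sum_i p_t(i)\ell_t(i)-\ell_t(a)\leq \sum_i \tilde p_t(i)\tilde\ell_t(i)-\tilde\ell_t(a)$ for each fixed $t$, then summing over $t$ gives the first statement. The expectation form then follows immediately, since conditioning on the past randomness of $\Acal$ we have $\E[\ell_t(I_t)\mid\text{past}]=\sum_i p_t(i)\ell_t(i)$ and $\E[\tilde\ell_t(\tilde I_t)\mid\text{past}]=\sum_i \tilde p_t(i)\tilde\ell_t(i)$.

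For a fixed $t$, I would first expand both sides using the explicit forms of $p_t$ and $\tilde\ell_t$. Since $p_t$ is obtained from $\tilde p_t$ by moving all the bad-arm mass onto $j_t$,
\[
\sum_i p_t(i)\,\ell_t(i) \;=\; \sum_{i\text{ good}}\tilde p_t(i)\,\ell_t(i) \;+\; \Big(\textstyle\sum_{i\text{ bad}}\tilde p_t(i)\Big)\,\ell_t(j_t),
\]
while the definition of $\tilde\ell_t$ yields
\[
\sum_i \tilde p_t(i)\,\tilde\ell_t(i) \;=\; \sum_{i\text{ good}}\tilde p_t(i)\big(\ell_t(i)-m_t(j_t)+\epsilon_t(j_t)\big) \;+\; \Big(\textstyle\sum_{i\text{ bad}}\tilde p_t(i)\Big)\,2\epsilon_t(j_t).
\]
The $\ell_t(i)$ terms for good arms cancel in the difference; using $\sum_i\tilde p_t(i)=1$ I obtain the key identity
\[
\sum_i \tilde p_t(i)\tilde\ell_t(i)-\sum_i p_t(i)\ell_t(i) \;=\; \big(\epsilon_t(j_t)-m_t(j_t)\big) \;+\; \Big(\textstyle\sum_{i\text{ bad}}\tilde p_t(i)\Big)\big(\epsilon_t(j_t)+m_t(j_t)-\ell_t(j_t)\big),
\]
and the second summand is nonnegative because $|\ell_t(j_t)-m_t(j_t)|\leq\epsilon_t(j_t)$.

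Now I compare with $\tilde\ell_t(a)-\ell_t(a)$ by a short case split on the comparator $a$. If $a$ is good, $\tilde\ell_t(a)-\ell_t(a)=\epsilon_t(j_t)-m_t(j_t)$, which is exactly the first term of the identity, so the desired per-round inequality reduces to the nonnegativity of the second term. If $a$ is bad, $\tilde\ell_t(a)-\ell_t(a)=2\epsilon_t(j_t)-\ell_t(a)$; after discarding the nonnegative bad-arm summand it suffices to verify $\ell_t(a)\geq m_t(j_t)+\epsilon_t(j_t)$, which follows from the bad-arm definition $m_t(a)-\epsilon_t(a)>m_t(j_t)+\epsilon_t(j_t)$ together with $\ell_t(a)\geq m_t(a)-\epsilon_t(a)$.

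The closing range claim is then a short check: for good $i$ the lower bound $\tilde\ell_t(i)\geq 0$ follows from the choice of $j_t$ as minimizer of $m_t(\cdot)-\epsilon_t(\cdot)$, via $\ell_t(i)\geq m_t(i)-\epsilon_t(i)\geq m_t(j_t)-\epsilon_t(j_t)$, and the upper bound $\tilde\ell_t(i)\leq 2(\epsilon_t(i)+\epsilon_t(j_t))$ follows from $\ell_t(i)\leq m_t(i)+\epsilon_t(i)$ combined with the good-arm inequality $m_t(i)-\epsilon_t(i)\leq m_t(j_t)+\epsilon_t(j_t)$; the bad-arm case is by definition. I expect the main conceptual step to be writing down the right decomposition in the second paragraph; once that identity is in hand, the case analysis and the range check are essentially mechanical.
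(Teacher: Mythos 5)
Your proof is correct and uses essentially the same ingredients as the paper's: the mass-shift from bad arms to $j_t$, the lower bound $\ell_t(a)\geq m_t(a)-\epsilon_t(a)>m_t(j_t)+\epsilon_t(j_t)$ for bad $a$, and the upper bound $\ell_t(j_t)\leq m_t(j_t)+\epsilon_t(j_t)$. The only stylistic difference is that you package the comparison as an exact identity plus a visibly nonnegative slack term, whereas the paper shifts both sides by $m_t(j_t)-\epsilon_t(j_t)$ and proves two separate one-sided claims; the underlying argument is the same.
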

The proof of the theorem (in the appendices) carefully relies on 
how the transformed losses and actions were defined. 
%The theorem implies that the expected regret of our meta-algorithm, with 
%respect to the true losses $\ell_t(i)$, is upper-bounded by the expected 
%%%regret 
%with respect to the transformed losses $\tilde{\ell}_t$ provided to the 
%algorithm $\Acal$, based on its recommended arms $\tilde{I}_t$. Thus, if 
%$\Acal$ has a regret bound, the same bound will hold for our meta-algorithm. 
%Crucially, note that the simulated losses $\tilde{\ell}_t$ have a range 
%depending only on $\epsilon_t$, and independent of $m_t$. 
Since the range of $\tilde{\ell}_t$ is independent of $m_t$, we get a regret 
bound for our
meta-algorithm which depends only on $\epsilon_t$. This is exemplified in the 
following two corollaries:
\begin{corollary}\label{corr:reduc}
	With bandit feedback and using Exp3 as the algorithm $\Acal$ (with step 
	size $\eta$), the expected regret of the meta-algorithm is
	\[
	\Ocal\left(\frac{\log K}{\eta}+\eta\sum_{t=1}^{T}\left(K\epsilon_t(j_t)^2+\sum_{i\in
	 G_t}\epsilon_t(i)^2\right)\right)
	\]
	where $G_t\subseteq \{1,\ldots,K\}$ is the set of ``good'' arms at round 
	$t$. 
\end{corollary}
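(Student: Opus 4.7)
The plan is to combine \thmref{thm:reduc} with the standard second-moment regret bound for Exp3, then simply read off $\sum_i \tilde{\ell}_t(i)^2$ using the explicit form of the transformed losses.

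First, I would apply \thmref{thm:reduc}. Since $\Acal$ is Exp3 fed with the (simulated) bandit feedback $\tilde{\ell}_t(\tilde{I}_t)$, the right-hand side of \eqref{eq:reduc} is exactly the expected regret of Exp3 against an arbitrary fixed arm $a$ on the transformed loss sequence $\tilde{\bloss}_1,\dots,\tilde{\bloss}_T$. So it suffices to control this latter quantity.

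Next, I would invoke the standard Exp3 analysis. The transformed losses are non-negative, so the textbook argument based on the inequality $e^{-x}\le 1-x+x^2/2$ (valid for all $x\ge 0$) applied to the unbiased importance-weighted estimates $\hat{\ell}_t(i)=\tilde{\ell}_t(i)\mathbf{1}_{\tilde I_t=i}/\tilde{p}_t(i)$ goes through unchanged. After taking expectation (using $\E[\hat{\ell}_t(i)]=\tilde{\ell}_t(i)$ and $\E[\tilde{p}_t(i)\hat{\ell}_t(i)^2]=\tilde{\ell}_t(i)^2$), this yields
\[
\E\!\left[\sum_{t=1}^T\bigl(\tilde{\ell}_t(\tilde I_t)-\tilde{\ell}_t(a)\bigr)\right]
\;\le\;\frac{\log K}{\eta}+\eta\sum_{t=1}^T\sum_{i=1}^K \tilde{\ell}_t(i)^2.
\]
The only thing to double-check is that the ``unbounded loss'' version of Exp3 is in force — i.e.\ no $[0,1]$ clipping is assumed — which is standard and holds whenever $\eta\hat{\ell}_t\ge 0$.

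Finally, I would bound $\sum_i\tilde{\ell}_t(i)^2$ using the explicit ranges from \thmref{thm:reduc}. For a good arm $i$, $\tilde{\ell}_t(i)\le 2(\epsilon_t(i)+\epsilon_t(j_t))$, so $\tilde{\ell}_t(i)^2\le 8\epsilon_t(i)^2+8\epsilon_t(j_t)^2$; for a bad arm, $\tilde{\ell}_t(i)^2=4\epsilon_t(j_t)^2$. Summing, and using that there are at most $K$ arms in total,
\[
\sum_{i=1}^K \tilde{\ell}_t(i)^2 \;\le\; 8\sum_{i\in G_t}\epsilon_t(i)^2 \;+\; 8K\,\epsilon_t(j_t)^2,
\]
which plugged into the Exp3 bound yields the claimed $\Ocal\bigl(\frac{\log K}{\eta}+\eta\sum_t(K\epsilon_t(j_t)^2+\sum_{i\in G_t}\epsilon_t(i)^2)\bigr)$ regret, with the leading constants absorbed into the $\Ocal(\cdot)$.

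There is no real obstacle: the content is that \thmref{thm:reduc} lets us replace $\bloss_t$ with $\tilde{\bloss}_t$ inside an otherwise black-box invocation of Exp3's second-moment bound. The only mildly delicate point is making sure the Exp3 bound is stated in its ``non-negative losses, arbitrary range, second-moment form'' version rather than the more common $[0,1]$ form, so that it can be applied directly to $\tilde{\bloss}_t$ without rescaling.
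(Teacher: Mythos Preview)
Your proposal is correct and matches the paper's intended argument: combine \thmref{thm:reduc} with the Exp3 second-moment bound (stated in the paper as \lemref{lem:exp3}), then bound $\sum_i \tilde{\ell}_t(i)^2$ using the explicit ranges of the transformed losses. Your observation that the Exp3 analysis only requires nonnegativity (not $[0,1]$-boundedness) is exactly right and is borne out by the proof of \lemref{lem:exp3}, which uses only $e^{-x}\le 1-x+x^2/2$ for $x\ge 0$.
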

The optimal choice of $\eta$ leads to a regret of order 
$\sqrt{(\log K)\sum_{t=1}^{T}\left(K\epsilon_t(j_t)^2+\sum_{i\in 
G_t}\epsilon_t(i)^2\right)}$. This recovers the standard Exp3 bound in the 
case $m_t(i)=\epsilon_t(i)=\frac{1}{2}$ (i.e., the standard setting where the losses are 
only known to be bounded in $[0,1]$), but can be considerably better if the 
$\epsilon_t(i)$ terms are small, or the $m_t(i)$ terms are large. We also note 
that 
the 
$\log K$ factor can in principle be removed, i.e., by using the implicitly 
normalized forecaster of \cite{audibert2009minimax} with appropriate 
parameters. A similar corollary can be obtained in the full information 
setting, using a standard algorithm such as Hedge \cite{freund1995desicion}
\begin{corollary}\label{corr:reduc2}
	With full information feedback and using Hedge as
	the algorithm $\Acal$ (with step size $\eta$), the expected regret of the 
	meta-algorithm is
	\[
	\Ocal\left(\frac{\log K}{\eta}+\eta\sum_{t=1}^{T} \max_{i=1,\dots,K}\epsilon_t(i)^2\right).
	\]
\end{corollary}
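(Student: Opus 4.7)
The plan is to apply Theorem~\ref{thm:reduc} to reduce bounding the meta-algorithm's regret (against the true losses $\bloss_t$) to bounding the regret of Hedge on the transformed losses $\tbloss_t$. Since Theorem~\ref{thm:reduc} already gives us
\[
\E\left[\sum_{t=1}^{T}\loss_t(I_t)\right]-\sum_{t=1}^{T}\loss_t(a)
~\leq~
\E\left[\sum_{t=1}^{T}\bigl(\tloss_t(\tilde{I}_t)-\tloss_t(a)\bigr)\right],
\]
it remains only to control the right-hand side using the standard analysis of Hedge, together with the range information provided by the final sentence of Theorem~\ref{thm:reduc}.

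First, I would invoke the standard Hedge regret bound in its form that is sensitive to the per-round range of the losses fed to the algorithm. Namely, for losses $\tbloss_t$ in a range $[a_t,a_t+R_t]$ and step size $\eta$, Hedge achieves regret at most $\log(K)/\eta + (\eta/8)\sum_t R_t^2$ (via the usual Hoeffding-type control of $\log\sum_i \tilde{p}_t(i)e^{-\eta \tloss_t(i)}$, which is translation-invariant in the losses and depends only on their range). This step is routine and is the part of the argument that one does not need to redo.

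Second, I would bound $R_t$ using the range guarantee from Theorem~\ref{thm:reduc}. Since $\tloss_t(i)\in\bigl[0,2(\epsilon_t(i)+\epsilon_t(j_t))\bigr]$ for good arms, $\tloss_t(i)=2\epsilon_t(j_t)$ for bad arms, and $\epsilon_t(j_t)\le \max_i\epsilon_t(i)$ by definition of $j_t$, every coordinate of $\tbloss_t$ lies in $[0,4\max_i\epsilon_t(i)]$. Thus $R_t\le 4\max_i\epsilon_t(i)$, and substituting into the Hedge bound yields the stated
\[
\Ocal\!\left(\frac{\log K}{\eta}+\eta\sum_{t=1}^{T}\max_{i=1,\ldots,K}\epsilon_t(i)^2\right).
\]

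I do not expect any genuine obstacle here: the only thing to be slightly careful about is that Hedge is being run on losses whose range may vary from round to round (and in particular need not be $[0,1]$), so one must appeal to the range-sensitive form of the Hedge analysis rather than the vanilla bound for losses in $[0,1]$. Once that is set up, the corollary is an immediate consequence of Theorem~\ref{thm:reduc} combined with the range estimate $R_t\le 4\max_i\epsilon_t(i)$.
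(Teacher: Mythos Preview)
Your proposal is correct and is exactly the argument the paper has in mind: the paper does not spell out a separate proof of this corollary, but simply notes that it follows from \thmref{thm:reduc} combined with a standard Hedge analysis, which is precisely what you do. The range estimate $R_t\le 4\max_i\epsilon_t(i)$ (via $\epsilon_t(j_t)\le\max_i\epsilon_t(i)$) and the round-wise Hoeffding bound for Hedge together give the stated $\Ocal\!\left(\frac{\log K}{\eta}+\eta\sum_t\max_i\epsilon_t(i)^2\right)$.
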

The optimal choice of $\eta$ leads to regret of order 
$\sqrt{(\log K)\sum_{t=1}^{T}\max_i \epsilon_t(i)^2}$. As in the bandit setting, 
our reduction can be applied to other algorithms as well, including those with 
more refined loss-dependent guarantees (e.g.,
\cite{steinhardt2014adaptivity} and references therein).

Finally, we note that \thmref{thm:reduc} can easily be used to provide 
high-probability bounds on the actual regret 
$\sum_{t=1}^{T}\ell_t(I_t)-\sum_{t=1}^{T}\ell_t(a)$, rather than just bounds in 
expectation, as long as we have a high-probability regret bound for $\Acal$. 
This is due to \eqref{eq:reduc}, and can be easily shown using standard 
martingale arguments. 

\subsection{Related work}
As mentioned in the introduction, a question similar to those we are studying 
here was considered in \cite{rakhlin2013online}, under the name of learning 
with predictable sequences. Unlike our setting, however, 
\cite{rakhlin2013online} does not require knowledge of 
$\epsilon_t(i)$. Assuming the step size is chosen appropriately, 
they provide algorithms with expected regret bounds scaling as
\[
\begin{array}{lcl}
{\displaystyle \sqrt{(\log K)K^2\sum_{t=1}^{T}\sum_{i=1}^{K}\epsilon_t(i)^2} }
& \qquad\text{and}\qquad &
{\displaystyle \sqrt{(\log K)\sum_{t=1}^{T}\max_i \epsilon_t(i)^2} }
\\
\quad\text{(bandit feedback)} & & \quad\text{(full information feedback)}
\end{array}
\]
Comparing these bounds to Corollaries~\ref{corr:reduc} and~\ref{corr:reduc2}, we see that we obtain a similar regret 
bound in the full information setting, whereas in the bandit setting, our bound 
has a better 
dependence on the 
number of arms $K$, and better dependencies on 
$\epsilon_t(1),\ldots,\epsilon_t(K)$ if 
$\epsilon_t(j_t)$ or the number of ``good'' arms tends to be small. Also, our 
algorithmic approach is based on a reduction, which can be applied in principle 
to any algorithm and to general families of feedback settings, rather than a 
specific algorithm. On the flip side, our bound in the bandit setting can be 
worse than that of \cite{rakhlin2013online}, if $K\epsilon_t(j_t)^2 \gg 
\sum_{i}\epsilon_t(i)^2$. Also, their algorithm is tailored to the more general 
setting of linear bandits (where at each round the learner needs to pick a 
point $\bw_t$ in some convex set $\Wcal$, and receives a loss 
$\inner{\ell_t,\bc_t}$), and does not require knowing $\epsilon_t(i)$ in 
advance.

Another related line of work is path-based bounds, where it is assumed that the 
losses $\ell_t(i)$ tend to vary slowly with $t$, and $\ell_{t-1}(i)$ can 
provide a good estimate of $\ell_t(i)$. This can be linked to our setting by 
taking $m_t(i)=\ell_{t-1}(i)$, and $\epsilon_t(i)$ be some known upper bound 
on $|\ell_{t}(i)-\ell_{t-1}(i)|$. However, implementing this requires the 
assumption that $\ell_{t-1}$ is revealed at the next round $t$, which does not 
fit the bandit setting. Thus, it is difficult to directly compare these results 
to ours. Most work on this topic has focused on the full information feedback 
setting (see \cite{steinhardt2014adaptivity} and references therein), and the 
bandit setting was studied for instance in \cite{hazan2011better}. 

%	\item \emph{Path-based bounds:} With full-information feedback, this is a 
%	special case of predictable sequences (without knowing $\epsilon_{t,i}$), 
%	where it is assumed that $\ell_t(i)$ vary slowly with $t$ (hence one can 
%	take $m_{t}(i)=\ell_{t-1}(i)$, at least if $\ell_{t-1}(i)$ is known). The 
%	bandit setting is less related, since we do not get $\ell_{t-1}$ as 
%	feedback. Anyway, there are several works, possibly best one to date in 
%	full-information case is
%	\[
%	\sqrt{\log(K)\sum_{t=1}^{T}(\ell_t(i^*)-\ell_{t-1}(i^*))^2},
%	\]
%	where $i^*$ is optimal arm in hindsight . 
%	For bandits, \cite[Corollary 5]{hazan2011better} attain
%	\[
%	
%K^2\sqrt{\log(T)\sum_{t=1}^{T}\sum_{i=1}^{k}(\ell_t(i)-\mu(i))^2}+\text{poly}(K,\log(T)),
%	\] 
%	where $\mu(i)=\frac{1}{T}\sum_{t=1}^{T}\ell_t(i)$.

\subsection{Lower bound}
We now turn to consider the tightness of our results. Since the focus of this 
paper is to study the variability of the losses across arms, rather than across 
time, we will consider for simplicity the case where $\epsilon_t(j)$ are fixed 
for all $t=1,\ldots,T$ (hence the $t$ subscript can be dropped). %\note{In 
%fact, I don't know how to generalize these bounds when the $\epsilon$'s vary 
%with $t$ -- the full-information construction clearly breaks, and the 
%parameters of the bandit construction needs to be modified somehow, but not 
%clear how}

In the theorem below, we show that the dependencies on $\sum_j \epsilon(j)^2$ 
and $\max_j \epsilon(j)$ (in the bandit and full information case, 
respectively) cannot be improved in general.
\begin{theorem}\label{thm:epslowbound}
Fix $T,K>1$ and nonnegative $\{\epsilon(i)\}_{i=1}^K$ such that $\min_{j \,:\,\epsilon(j)>0}\epsilon(j)^2 \geq \frac{2}{T}\sum_{j}\epsilon(j)^2$. 
 Then there exists fixed parameters $m(j)$ for $j=1,\ldots,K$ such that the 
 following holds: For any (possibly randomized) learner strategy $A$, there 
 exists a loss assignment satisfying $|\ell_t(j)-m(j)|\leq \epsilon(j)$ for all 
 $t,j$, such that
\[
	\E_A\left[\sum_{t=1}^{T}\ell_t(I_t)\right]-\min_{j=1,\dots,K}\sum_{t=1}^{T}\ell_t(j)
\geq \left\{ \begin{array}{cl}
	c\sqrt{T\sum_{j=1}^{K}\epsilon(j)^2} & \text{with bandit feedback}
\\[2mm]
	{\displaystyle c\sqrt{T\max_{j=1,\dots,K} \epsilon(j)^2} } & \text{with full information feedback}
\end{array} \right.
\]
where $c>0$ is a universal constant.
\end{theorem}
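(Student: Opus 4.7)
For the full-information bound, my plan is a minimalist one-arm random-walk construction. Set $j^\star=\arg\max_j\epsilon(j)$ and $m(j)=\tfrac{1}{2}$ for every $j$. The adversary draws $\ell_t(j^\star)=\tfrac{1}{2}+\xi_t\epsilon(j^\star)$ with $\xi_t\in\{\pm 1\}$ i.i.d.\ uniform and sets $\ell_t(j)=\tfrac{1}{2}$ for all $j\neq j^\star$, which satisfies all constraints $|\ell_t(j)-m(j)|\leq\epsilon(j)$. Since every round's losses are independent of the learner's history and both $j^\star$ and any other arm have mean $\tfrac{1}{2}$, the learner's expected cumulative loss is exactly $T/2$, regardless of strategy. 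The best fixed arm in hindsight is $j^\star$ when $\sum_t\xi_t<0$ and any other arm otherwise, so its expected cumulative loss is $T/2-\epsilon(j^\star)\,\E\bigl[(\sum_t\xi_t)^-\bigr]=T/2-\Theta\bigl(\epsilon(j^\star)\sqrt T\bigr)$. Subtracting yields the claimed $\Omega\bigl(\sqrt{T\max_j\epsilon(j)^2}\bigr)$.

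For the bandit bound, my plan is to adapt the classical Auer--Cesa-Bianchi--Freund--Schapire information-theoretic lower bound. Let $J=\{j:\epsilon(j)>0\}$ and set $m(j)=\tfrac{1}{2}$. Consider the family of environments $\{P_{j^\star}\}_{j^\star\in J}$: in $P_{j^\star}$, arm $j^\star$ takes the values $\tfrac{1}{2}\pm\epsilon(j^\star)$ with biased probabilities $\tfrac{1}{2}\pm\delta$ (mean $\tfrac{1}{2}-2\delta\epsilon(j^\star)$), every other $j\in J$ is uniform on $\{\tfrac{1}{2}\pm\epsilon(j)\}$ (mean $\tfrac{1}{2}$), and each $j$ with $\epsilon(j)=0$ is constant at $\tfrac{1}{2}$. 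Under $P_{j^\star}$ the best arm is $j^\star$ and the expected pseudo-regret equals $2\delta\epsilon(j^\star)\bigl(T-\E_{j^\star}[N_{j^\star}]\bigr)$. Averaging over a uniform prior on $j^\star\in J$ and using the standard Pinsker--KL inequality
\[
\E_{j^\star}[N_{j^\star}] \;\leq\; \E_0[N_{j^\star}] + T\delta\sqrt{\E_0[N_{j^\star}]}, \qquad \sum_{j\in J}\E_0[N_j]\leq T,
\]
where $\E_0$ denotes expectation under the all-symmetric environment, together with Cauchy--Schwarz $\sum_j\epsilon(j)\sqrt{\E_0[N_j]}\leq\sqrt{T\sum_j\epsilon(j)^2}$, I would arrive at an average regret bounded below by a quantity of the form $c\delta T\bar\epsilon-c'\delta^2T\sqrt{T\sum_j\epsilon(j)^2}/|J|$, with $\bar\epsilon=|J|^{-1}\sum_j\epsilon(j)$. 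Optimizing $\delta$ delivers the claimed $\Omega\bigl(\sqrt{T\sum_j\epsilon(j)^2}\bigr)$.

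The main obstacle is the bandit case when the $\epsilon(j)$'s are highly heterogeneous: if one arm's $\epsilon$ dominates the rest, the Cauchy--Schwarz step loses constant factors and the Bernoulli construction alone is insufficient. My plan is to handle that regime by simply falling back on the one-arm random-walk argument of the full-information case, which already yields $\Omega(\epsilon_{\max}\sqrt{T})=\Omega\bigl(\sqrt{TE/2}\bigr)$ whenever $\epsilon_{\max}^2\geq E/2$, so the two constructions together cover all configurations. The assumption $\min_{j\in J}\epsilon(j)^2\geq(2/T)\sum_j\epsilon(j)^2$ plays a dual role: it forces $|J|\leq T/2$, so the null distribution can spread pulls nontrivially across all of $J$, and it ensures that the bias $\delta$ chosen to saturate the bound stays within $[0,\tfrac{1}{2}]$ for every arm in $J$, so that no arm has to be removed from the construction to keep the Bernoulli probabilities valid. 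I expect the formal appendix proof to either perform this case split explicitly or absorb it into a single, more carefully weighted, prior on $j^\star$.
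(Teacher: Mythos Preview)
Your overall architecture---a case split on whether $\epsilon_{\max}^2$ is a constant fraction of $E:=\sum_j\epsilon(j)^2$, with a one-arm argument handling the ``dominant arm'' regime and a multi-arm bandit construction handling the rest---matches the paper exactly (the paper splits at $\epsilon_{\max}^2/E=1/4$). Your full-information argument via a symmetric random walk on the maximal-$\epsilon$ arm is correct; the paper instead plants a hidden random sign biasing that arm and bounds distinguishability by Pinsker, but both routes are standard and yield the same $\Omega(\epsilon_{\max}\sqrt{T})$.

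The bandit construction, however, has a genuine gap: a \emph{uniform} prior on $j^\star\in J$ with a \emph{single} bias $\delta$ is not strong enough when the $\epsilon(j)$'s are heterogeneous beyond the one-dominant-arm case. After your Pinsker and Cauchy--Schwarz steps, optimizing $\delta$ gives a bound of order $\bigl(\sum_{j\in J}\epsilon(j)\bigr)^2|J|^{-1}\sqrt{T/E}$, and the ratio of this to the target $\sqrt{TE}$ is $\bigl(\sum_j\epsilon(j)\bigr)^2/(|J|\,E)$, which is at most $1$ by Cauchy--Schwarz and can be arbitrarily small. Concretely, take $m\ge 3$ arms with $\epsilon=1$ and $K-m$ arms with $\epsilon_0\approx\sqrt{2m/T}$ (the minimum the hypothesis permits). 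Then $\epsilon_{\max}^2=1<E/2$, so your split routes this to the bandit branch; yet under the uniform prior $j^\star$ is a small-$\epsilon$ arm with probability $1-m/K$, and even a learner that never identifies such a $j^\star$ pays only $\delta\epsilon_0 T=O(\delta\sqrt{mT})$ there. A learner that simply explores the $m$ large-$\epsilon$ arms and ignores the rest achieves average regret $O\!\bigl(m\sqrt{T}/K+\sqrt{m}\bigr)$, which for $K\ge\sqrt{T}$ is $o(\sqrt{TE})$ for every choice of $\delta$. So it is the construction, not just the analysis, that is too weak.

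The paper's remedy is exactly the ``more carefully weighted prior'' you anticipate in your last sentence, but with a second ingredient you do not mention: it takes $p(j)=\epsilon(j)^2/E$ \emph{together with} an arm-dependent bias $\delta(j)=\sqrt{E}\big/\bigl(\epsilon(j)\sqrt{T}\bigr)$. These two choices make the per-arm gap $\epsilon(j)\delta(j)$ independent of $j$ and turn the KL sum into a $p$-weighted average to which Jensen applies cleanly,
\[
\sum_{j} p(j)\sqrt{\tfrac{1}{2}\E_0[N_j]\,\delta(j)^2}
\;\le\;
\sqrt{\sum_{j} p(j)\,\frac{\E_0[N_j]\,E}{2\,\epsilon(j)^2 T}}
\;=\;
\sqrt{\frac{\sum_j\E_0[N_j]}{2T}}
\;\le\;\frac{1}{\sqrt{2}}\,.
\]
The hypothesis $\min_{j\in J}\epsilon(j)^2\ge 2E/T$ is precisely what keeps $\max_j\delta(j)^2\le 1/2$ so the Bernoulli parameters remain valid. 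Without both the weighted prior \emph{and} the arm-dependent bias, the argument does not close.
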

The proof is conceptually similar to the standard regret lower bound for 
nonstochastic multi-armed bandits (see \cite{bubeck2012regret}), where the 
losses are generated stochastically, with one randomly-chosen and hard-to-find 
arm having a slightly smaller loss in expectation. However, we utilize a more 
involved stochastic process to generate the losses as well as to choose the 
better arm, which takes the values of $\epsilon(i)$ into account. 
\begin{remark}
The construction in the bandit setting is such that all arms are potentially 
``good'' in the sense used in Corollary \ref{corr:reduc}, and hence 
$\sum_{j=1}^{K}\epsilon(j)^2$ coincides with $\sum_{i\in G_t}\epsilon(j)^2$
(recall $G_t$ is the set of ``good'' arms at time $t$). If one wishes to consider a situation 
where some arms $j$ are ``bad'', and obtain a bound dependent on 
$\sum_{j\in G_t}\epsilon(j)^2$, one can simply pick some sufficiently 
large values $m(j)$ for them, and ignore their contribution to the regret in 
the lower bound analysis. 
\end{remark}
The lower bound leaves open the possibility of removing the dependence on 
$K\epsilon_t(j_t)^2$ in the upper bound. This term is immaterial when 
$K\epsilon_t(j_t)$ is comparable to, or smaller than $\max_{i\in 
G_t}\epsilon_t(i)^2$ (e.g., if most arms are good, and $\epsilon_t(i)$ is 
about the same for all $i$), but there are certainly situations where it could 
be otherwise. This question is left to future work.

\section{Smooth losses}\label{sec:laplace}
As discussed in the introduction, a line of work in the online learning 
literature considered the situation where the losses of each arm varies slowly 
across time (e.g., $|\ell_t(i)-\ell_{t'}(i)|$ tends to be small when $t$ and $t'$ are close to each other), and showed how to attain better regret guarantees in 
such a case. An orthogonal question is whether such improved performance is 
possible when the losses vary smoothly \emph{across arms}. Namely, 
$|\ell_t(i)-\ell_t(i')|$ tends to be small for all pairs $i,i'$ of actions that are similar to each other.

It turns out that this assumption can be exploited, avoiding the lower bound of~\cite{gertor16}, if the learner is given (or can compute) an ``anchor point'' $a_t$ at the end of the round
$t$, 
which equals the loss of \emph{some} arm at round $t$, independent of 
the learner's randomness at that round. Importantly, 
the learner need not even know which arm has this loss. For example, it is 
often reasonable to assume that there is always some arm which attains a minimal
loss of $0$, or some arm which attains a maximal loss of $1$. In that case, 
instead of estimating losses $\ell_{t}(i)$ in $[0,1]$, it is enough to estimate 
losses of the form $\ell_{t}(i)+(1-a_t)$, which may lie in a much narrower 
range if $|\ell_{t}(i)-a_t|$ is constrained to be small. 

To see why this ``anchor point'' side-information circumvents the lower bound 
of \cite{gertor16}, we briefly discuss their construction (in a slightly 
simplified manner): The 
authors consider 
a situation where the losses are generated stochastically and independently at 
each round according to  
$\ell_t(i)=\mathrm{clip}_{[0,1]}\left(Z_t-\Delta\mathbf{1}_{i=i^*}\right)$, with 
$Z_t$ being a standard Gaussian random variable,  
$\Delta=\Theta(\sqrt{K/T})$, and $i^*$ being some arm chosen uniformly at 
random. Hence, at every round, arm $i^*$ has a loss smaller by 
$\Theta(\sqrt{K/T})$ than all other arms. Getting an expected 
regret smaller than $\Omega(\sqrt{KT})$ would then amount to detecting $i^*$. 
However, since the learner observes only a \emph{single} loss every round, the 
similarity of the losses for different arms at a given round does not help 
much. In contrast, if the learner had access to the loss $a_t$ of any fixed arm 
(independent of the learner's randomness), she could easily detect $i^*$ in 
$\Ocal(K)$ rounds, simply by maintaining a ``feasible set'' $\mathcal{I}$ of 
possible arms, picking arms $i\in \mathcal{I}$ at random, and removing it from 
$\mathcal{I}$ if $\ell_t(i)-a_t$ is positive. This process ends once 
$\mathcal{I}$ contains a single arm, which must be $i^*$.

To formalize this setting in a flexible manner, we follow a graph-based 
approach, inspired by \cite{valko2014spectral}. Specifically, we assume that at 
every round $t$, a graph over the $K$ arms, 
with an associated Laplacian matrix $L_t$ and parameter $C_t\geq 0$, can be defined so that 
the loss vector $\bloss_t$ satisfies
\[
\boldsymbol{\ell}_t^\top L_t \boldsymbol{\ell}_t = \sum_{(i,j)\in 
E_t}\big(\ell_t(i)-\ell_t(j)\big)^2\leq C_t^2~.
\]
The smaller is $C_t$, the more similar are the losses, on average. This can 
naturally interpolate between the standard bandit setting (where the losses 
need not be similar) and the extreme case where all losses are the same, in 
which case the regret is always trivially zero. Crucially, 
note that the 
learner \emph{need not} have explicit knowledge of neither $L_t$ nor $C_t$: In 
fact, our regret upper bounds, which will depend on these entities, will hold 
for any $L_t$ and $C_t$ which are valid with respect to the vectors of actual 
losses (possibly the ones minimizing the bounds). The only thing we do expect 
the learner to know (at the end of each round $t$) is the ``anchor 
point'' 
$a_t$ as described above. We also 
note that this setting is quite distinct from the graph bandits setting of 
\cite{mannor2011bandits,alon2014nonstochastic}, 
which also assumes a graph structure over the bandits, but this graph encodes 
what feedback the learner receives, as opposed to encoding similarities between 
the losses themselves.

We now turn to describe the algorithm and associated regret bound. The 
algorithm itself is very simple: Run a standard multiarmed bandits algorithm 
suitable for our setting (in particular, Exp3 \cite{auer2002nonstochastic}) 
using the shifted losses $\tilde{\ell}_t(i)=\ell_t(i)+1-a_t$. The associated 
regret guarantee is formalized in the following theorem.
\begin{theorem}\label{th:upper}
Assume that in each round $t$, after choosing $I_t$ the learner is told a number $a_t$ 
chosen 
by the oblivious adversary and such that there exists some arm $k_t$ with 
$\loss_t(k_t) = a_t$. Then Exp3 performing updates based on loss vectors 
$\tbloss_t = \bloss_t + (1-a_t)\bone$ achieves
\begin{align*}
\E\left[\sum_{t=1}^{T} \loss_t(I_t)\right] - \min_{i=1,\dots,K}\sum_{t=1}^{T} \loss_t(i)  
\le
\frac{\log K}{\eta} + \frac{\eta}{2} \sum_{t=1}^{T}\left(1 + 
\frac{C_t^2}{\lambda_2(L_t)}\right)
\end{align*}
where each $L_t$ is the Laplacian of any simple and connected graph on 
$\{1,\dots,K\}$ such that
$
\bloss_t^{\top}L_t\bloss_t \le C_t^2
$ for all $t=1,\dots,T$.
\end{theorem}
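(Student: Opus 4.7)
The plan is to apply a standard exponential-weights regret analysis to the shifted loss vectors $\tbloss_t=\bloss_t+(1-a_t)\bone\in[0,2]^K$, then bound the per-round variance using the spectral structure of $L_t$ together with the anchor condition $\tilde\ell_t(k_t)=1$. Since the shift $(1-a_t)\bone$ is constant within each round, the regret of Exp3 against any fixed competitor is the same whether measured on $\bloss_t$ or on $\tbloss_t$. Because $\tilde\ell_t\ge 0$, the importance-weighted estimators $\hat\ell_t(i)=\tilde\ell_t(i)/p_t(i)\cdot\mathbf{1}_{I_t=i}$ are nonnegative, and the standard Exp3 potential argument (using $e^{-x}\le 1-x+x^2/2$ for $x\ge 0$) yields
\[
\E[\mathrm{Regret}]\le\frac{\log K}{\eta}+\frac{\eta}{2}\sum_{t=1}^{T}\E\!\left[\sum_i p_t(i)\hat\ell_t(i)^2\right].
\]

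The crux is to bound the per-round expected variance term by $1+C_t^2/\lambda_2(L_t)$. I decompose $\bloss_t=\bar\ell_t\bone+\bv_t$ orthogonally with $\bone^\top\bv_t=0$; since $L_t\bone=0$, the spectral lower bound $\bv^\top L_t\bv\ge\lambda_2(L_t)\|\bv\|^2$ for $\bv\perp\bone$ gives $\|\bv_t\|^2\le C_t^2/\lambda_2(L_t)$. The anchor condition $\ell_t(k_t)=a_t$ forces $\bar\ell_t-a_t=-v_t(k_t)$, tying the gap between the arms' mean loss and the anchor to a single coordinate of $\bv_t$, and hence bounded by $\sqrt{C_t^2/\lambda_2(L_t)}$. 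Plugging these spectral bounds into the variance computation, together with the anchor identity $\tilde\ell_t(k_t)=1$, produces the $1+C_t^2/\lambda_2(L_t)$ per-round term; summing over $t$ and the stated choice of $\eta$ completes the proof.

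The hardest part is keeping the per-round variance at $1+C_t^2/\lambda_2(L_t)$ instead of $K+C_t^2/\lambda_2(L_t)$. A direct bound $\sum_i\tilde\ell_t(i)^2$ is $\Theta(K)$ in typical scenarios (most shifted losses sit near $1$), which would only recover the standard $\sqrt{KT\log K}$ rate and miss the whole point of the theorem. The fix exploits Exp3's shift invariance: subtracting $1$ from every coordinate of $\hat\ell_t$ does not change the distribution $p_{t+1}$, so the effective variance can be measured on the centered estimator $\hat\ell_t(i)-1=(\ell_t(i)-a_t)/p_t(i)\cdot\mathbf{1}_{I_t=i}$. Its $p_t$-weighted second moment can then be controlled by $\|\bv_t\|^2\le C_t^2/\lambda_2(L_t)$ via Cauchy--Schwarz on the spectral decomposition, removing the factor of $K$ and producing the claimed $1+C_t^2/\lambda_2(L_t)$ per round.
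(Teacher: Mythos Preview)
Your route diverges from the paper's at the key step. The paper applies the standard Exp3 bound (its Lemma~\ref{lem:exp3}) to obtain $\frac{\log K}{\eta}+\frac{\eta}{2}\sum_t\norm{\tbloss_t}^2$ and then bounds $\norm{\tbloss_t}^2$ \emph{directly}: using $\tilde\ell_t(k_t)=1$ and $\tbloss_t^\top L_t\tbloss_t\le C_t^2$, it formulates $\max_{\bloss}\norm{\bloss}^2$ subject to $\bloss^\top L_t\bloss\le C_t^2$ and $\ell(k_t)=1$, and reduces this via Lemma~\ref{l:minor} (on the principal minor $L_t(k_t,k_t)$) to a $(K{-}1)$-dimensional eigenvalue problem with claimed optimum $1+C_t^2/\lambda_2(L_t)$. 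You instead assert that $\sum_i\tilde\ell_t(i)^2$ is $\Theta(K)$ in general and that an extra shift-invariance trick is required.

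Your trick has a concrete gap. The identity you write, $\hat\ell_t(i)-1=(\ell_t(i)-a_t)/p_t(i)\cdot\mathbf{1}_{I_t=i}$, is false: the left side is $\tilde\ell_t(i)/p_t(i)\cdot\mathbf{1}_{I_t=i}-1$, which equals $-1$ (not $0$) whenever $I_t\neq i$. Carrying this through gives $\sum_i p_t(i)\,\E_t\big[(\hat\ell_t(i)-1)^2\big]=\norm{\tbloss_t}^2-2\inner{p_t,\tbloss_t}+1$, which still contains the full $\norm{\tbloss_t}^2$ you were trying to kill. If instead you intend to analyze Exp3 with the \emph{different} estimator $g_t(i)=(\ell_t(i)-a_t)/p_t(i)\cdot\mathbf{1}_{I_t=i}$ (whose second-moment sum is indeed $\norm{\bloss_t-a_t\bone}^2$), then $g_t$ is unbounded below---$\ell_t(i)-a_t$ can be negative and $p_t(i)$ arbitrarily small---so the step $e^{-x}\le 1-x+x^2/2$ (valid only for $x\ge 0$) fails and the potential argument breaks. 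Neither reading of your fix reaches the $1+C_t^2/\lambda_2(L_t)$ target. Incidentally, your observation that $\norm{\tbloss_t}^2$ can be $\Theta(K)$ is correct (take $C_t=0$, so $\tbloss_t=\bone$ and $\norm{\tbloss_t}^2=K$), which in fact points to a gap in the paper's own reduction; but that does not supply you with a valid proof of the stated bound.
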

The proof is based on Euclidean-norm 
regret bounds for the Exp3 algorithm, combined with a careful analysis of the 
associated quantities based on the Laplacian constraint $\bloss_t^\top 
L_t\bloss_t\leq C_t^2$.

By this theorem, we get that if the step size $\eta$ is chosen optimally 
(based on $T,C_t,\lambda_2(L_t)$), 
then we get a regret bound of order 
$\sqrt{(\log K)\sum_{t=1}^{T}\left(1+\frac{C_t^2}{\lambda_2(L_t)}\right)}$. We note 
that even if some of these parameters are unknown in advance, this can be 
easily handled using doubling-trick arguments (see the appendices for a proof), and the same holds for our other results.
% Also, since the shifted losses still have range bounded in $[0,2]$ regardless of $C_t,L_t$, one can still apply the standard Exp3 analysis, and get an $\sqrt{(\log K)KT}$ regret upper bound, assuming $\eta$ is properly tuned. Overall, we get that our algorithm can achieve a regret on order of $\sqrt{(\log K)\min\left\{KT,\sum_{t=1}^{T}\left(1+\frac{C_t^2}{\lambda_2(L_t)}\right)\right\}}$.
%

%A weaker result holds when $G$ is not connected.
%%
%\begin{theorem}
%Given an unknown graph $G$ with $N$ connected components, assume at the 
%beginning of 
%each round $t$ the oblivious adversary reveals a number $a_t$ such that there 
%is one or more arms $k_t$ satisfying $\loss_t(k_t) = a_t$. Then Exp3 
%performing updates based on loss vectors $\tbloss_t = \bloss_t + 
%\big(1-a_t\big)\bone$ achieves
%\begin{align*}
%\E\left[\sum_{t=1}^{T} \loss_t(I_t)\right] - \sum_{t=1}^{T} \loss_t(k)  
%\le
%\frac{\log K}{\eta} + \frac{\eta}{2}\sum_{t=1}^T \left( \left(K - 
%\sum_{s\in\Ccal_t} K_t\right) + N_t + C^2\frac{\sum_{s \in 
%\Ccal_t}\lambda_2^{-2}(L_s)}{\sum_{s \in \Ccal_t}\lambda_2^{-1}(L_s)}\right)~.
%\end{align*}
%Here $\Ccal_t$ is the set of indices of connected components that contain an 
%arm with loss $a_t$ at time $t$, $N_t = |\Ccal_t|$, and the Laplacian $L$ 
%of 
%$G$ satisfies
%\[
%\bloss_t^{\top}L\bloss_t \le C^2 \qquad t=1,\dots,T~.
%\]
%\end{theorem}
%\note{Where is the proof?}
%%

The bound of Theorem~\ref{th:upper} is not fully satisfying as it does not 
vanish when $C_t = 0$ (which assuming the graph is connected, implies that all 
losses are the same). The reason is 
that we need to add $1$ to each loss component in order to guarantee that we do 
not end up with negative components when $\loss_t(k_t)\bone$ is subtracted from 
$\bloss_t$. This is avoided when in each round $t$, the revealed loss 
$\loss_t(k_t)$ is the smallest component of $\bloss_t$, as formalized in the 
following corollary.
\begin{corollary}\label{cor:upper}
Assume that in each round $t$, after choosing $I_t$ the learner is told $a_t = 
\min_i\loss_t(i)$. Then Exp3 performing updates using losses $\tbloss_t = 
\bloss_t - a_t\bone$ achieves
\begin{align*}
\E\left[\sum_{t=1}^{T} \loss_t(I_t)\right] - \min_{i=1,\dots,K}\sum_{t=1}^{T} \loss_t(i)  
\le
\frac{\log K}{\eta} + \frac{\eta}{2}\sum_{t=1}^{T}\frac{C_t^2}{\lambda_2(L_t)}
\end{align*}
where each $L_t$ is the Laplacian of any simple and connected graph on 
$\{1,\dots,K\}$ such that
$
\bloss_t^{\top}L_t\bloss_t \le C_t^2
$ for all $t=1,\dots,T$.
\end{corollary}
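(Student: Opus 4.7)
The plan is to mimic the proof of \thmref{th:upper}, exploiting the fact that when $a_t = \min_i \ell_t(i)$ the shifted losses $\tilde{\ell}_t(i) = \ell_t(i) - a_t$ are already non-negative, so the additional $+1$ offset used in \thmref{th:upper} (needed there to enforce positivity when $a_t$ may be an arbitrary loss value) can be dropped. First I observe that the regret on the original losses equals the regret on the shifted losses:
\begin{align*}
\E\left[\sum_{t=1}^{T}\ell_t(I_t)\right] - \sum_{t=1}^{T}\ell_t(a)
~=~
\E\left[\sum_{t=1}^{T}\tilde{\ell}_t(I_t)\right] - \sum_{t=1}^{T}\tilde{\ell}_t(a),
\end{align*}
because shifting every loss $\ell_t(i)$ by the per-round constant $a_t$ leaves the regret unchanged (the $a_t$'s cancel identically). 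Since $a_t$ is revealed to the learner at the end of round $t$ and does not depend on $I_t$, the importance-weighted Exp3 estimator of $\tilde{\ell}_t$ built from the observed $\ell_t(I_t)$ and $a_t$ is still unbiased, so running Exp3 on $\tbloss_t$ is a legitimate bandit algorithm.

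Next I apply to $\tbloss_t$ the same Euclidean-norm Exp3 analysis used in the proof of \thmref{th:upper}. This is valid since $\tilde{\ell}_t(i) \in [0,1]$ for all $i$ and in fact $\tilde{\ell}_t(k_t) = 0$ for the (unknown) minimizing arm $k_t$. In the proof of \thmref{th:upper}, the per-round variance contribution decomposes into (i) a constant part arising from the squared magnitude of the unit offset added to $\bloss_t$, contributing exactly $1$, and (ii) a part orthogonal to $\bone$ that the Laplacian spectral inequality $\|\bloss_t - \bar{\ell}_t\bone\|_2^2 \le \bloss_t^\top L_t \bloss_t/\lambda_2(L_t) \le C_t^2/\lambda_2(L_t)$ (with $\bar{\ell}_t = \tfrac{1}{K}\sum_{i}\ell_t(i)$) bounds by $C_t^2/\lambda_2(L_t)$. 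In the corollary the shift $a_t\bone$ has no additive constant beyond $a_t$ itself, so part (i) vanishes and only part (ii) survives.

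Combining the two steps yields the stated bound $\log(K)/\eta + (\eta/2)\sum_t C_t^2/\lambda_2(L_t)$. The main obstacle is isolating the $+1$ in \thmref{th:upper} cleanly as originating solely from the unit shift, so that setting the shift to zero removes exactly this term while the orthogonal Laplacian contribution is untouched; once this identification is in place, the rest of the analysis carries over verbatim from \thmref{th:upper} and no further structural work is required.
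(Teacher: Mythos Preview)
Your high-level plan is exactly right and matches the paper's implicit route: shift the losses by $a_t$, note that regret is unchanged, apply the Exp3 norm bound (\lemref{lem:exp3}) to $\tbloss_t$, and then bound $\norm{\tbloss_t}^2$ using the Laplacian constraint, with the $+1$ disappearing because the anchor coordinate of $\tbloss_t$ is now $0$ rather than $1$.

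The gap is in how you propose to bound $\norm{\tbloss_t}^2$. You invoke the spectral inequality $\norm{\bloss_t - \bar\ell_t\bone}^2 \le C_t^2/\lambda_2(L_t)$ with $\bar\ell_t$ the coordinate mean. This inequality is correct, but it controls the wrong quantity: what you need is $\norm{\tbloss_t}^2 = \norm{\bloss_t - a_t\bone}^2$ with $a_t = \min_i \ell_t(i)$, and in general $a_t \neq \bar\ell_t$. Since $\bar\ell_t$ is the minimizer of $c \mapsto \norm{\bloss_t - c\bone}^2$, you actually have $\norm{\bloss_t - a_t\bone}^2 \ge \norm{\bloss_t - \bar\ell_t\bone}^2$, so your bound on the centered vector does not transfer. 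The same misreading underlies your account of \thmref{th:upper}: the $1$ there is not the ``constant part'' of an orthogonal-to-$\bone$ decomposition (that part would be $K(\bar\ell_t+1-a_t)^2$, not $1$); it is the square of the single fixed coordinate $\tilde\ell_t(k_t)=1$ in the convex program~\eqref{eq:convprog}.

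The paper's actual mechanism, which does work cleanly for the corollary, goes through \lemref{l:minor}. Since $a_t = \min_i \ell_t(i)$, the minimizing arm $k_t$ satisfies $\tilde\ell_t(k_t) = 0$. Deleting that row and column, one has both $\tbloss_t^\top L_t \tbloss_t = (\tbloss_t)_{-k_t}^\top L_t(k_t,k_t)(\tbloss_t)_{-k_t}$ and $\norm{\tbloss_t}^2 = \norm{(\tbloss_t)_{-k_t}}^2$. By \lemref{l:minor} the smallest eigenvalue of the full-rank matrix $L_t(k_t,k_t)$ equals $\lambda_2(L_t)$, whence $\norm{\tbloss_t}^2 \le C_t^2/\lambda_2(L_t)$. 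Equivalently, in~\eqref{eq:convprog} replace the constraint $\ell(i)=1$ by $\ell(i)=0$, so that the additive $1$ in~\eqref{eq:fiedler} drops out. Once you swap your centered-vector argument for this principal-minor argument, the rest of your outline goes through unchanged.
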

We leave the question of getting such a bound, without $a_t$ being the smallest loss, as an open problem.

%As before, since we can also apply the standard Exp3 analysis to our algorithm, we get that our algorithm can attain a regret bound of order
We now show how the bounds stated in Theorem~\ref{th:upper} and Corollary~\ref{cor:upper} relate to the standard Exp3 bound, which in its tightest form is of order $\sqrt{(\log K)\sum_t\norm{\bloss_t}^2}$ ---see \lemref{lem:exp3} in the supplementary material. Recall that our bounds are achieved for all choices of $L_t,C_t$ such that $\bloss_t^{\top}L_t\bloss_t \le C_t^2$ for all $t$. Now assume, for each $t$, that $L_t$ is the Laplacian of the $K$-clique. Then $L_t$ has all nonzero eigenvalues equal to $K$, and so the condition $\bloss_t^{\top}L_t\bloss_t \le C_t^2$ is satisfied for $C_t = \norm{\bloss_t}\sqrt{K}$. As $\lambda_2(L_t)$ is also equal to $K$, we have that $C_t^2\big/\lambda_2(L_t) = \norm{\bloss_t}^2$. Hence, when $\eta$ is tuned optimally (e.g., through the doubling trick), the bounds of Theorem~\ref{th:upper} and Corollary~\ref{cor:upper} take, respectively, the form
\begin{equation}\label{eq:cormin}
\sqrt{(\log K)\sum_{t=1}^{T}\min\left\{\norm{\bloss_t}^2,1+\frac{C_t^2}{\lambda_2(L_t)}\right\}}
\quad\text{and}\quad
\sqrt{(\log K)\sum_{t=1}^{T}\min\left\{\norm{\bloss_t}^2,\frac{C_t^2}{\lambda_2(L_t)}\right\}}~.
\end{equation}
Finally, we show that for fixed graphs $L_t=L$, the regret bound in~\eqref{eq:cormin} (right-hand side) is tight in the worst-case up to log factors.
\begin{theorem}\label{thm:laplow}
There exist universal constants $c_1,c_2$ such that the following holds: For 
any randomized algorithm, any $C > 0$, any $\lambda\in (0,1]$, and any sufficiently large $K$ and $T$, there 
exists a $K$-node graph with Laplacian $L$ satisfying $\lambda_2(L)\in 
[c_1\lambda,c_2\lambda]$ and an adversary strategy $\bloss_1,\dots,\bloss_T$, such that the expected 
regret (w.r.t.~the algorithm's internal randomization) is at least 
\[
\Omega\left(\min\left\{\sqrt{K},\frac{C}{\sqrt{\lambda_{2}(L)}}\right\}\sqrt{T}\right)
\]
while $\bloss_t^{\top} L \bloss_t \le C$ for all $t=1,\dots,T$.
\end{theorem}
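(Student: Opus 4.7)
My approach is to reduce to the classical Auer-Cesa-Bianchi-Freund-Schapire (ACBFS) minimax $\Omega(\sqrt{KT})$ lower bound for non-stochastic bandits, by running the ACBFS construction on a carefully chosen \emph{subset} $S$ of $K_0 = \Theta(\min\{K, C^2/\lambda\})$ arms and padding the remaining $K - K_0$ arms with a constant, uninformative loss.

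I would take $L$ to be the Laplacian of the complete bipartite graph $K_{a, K-a}$ with $a := \lceil\lambda\rceil$, which gives $\lambda_2(L) = a \in [\lambda, 2\lambda]$ when $\lambda \geq 1$; the $K - a$ vertices on the larger side each have degree exactly $a \approx \lambda$, and I let $S$ consist of $K_0$ of them. (For $\lambda \in (0, 1)$ I would use an analogous sparse construction with $\lambda_2(L) \in [c_1\lambda, c_2\lambda]$.) The hard instance is then: draw $i^* \in S$ uniformly at random; for each $t$ set $\loss_t(i^*) \sim \text{Ber}(1/2 - \Delta')$ with $\Delta' = \Theta(\sqrt{K_0/T})$, $\loss_t(i) \sim \text{Ber}(1/2)$ for $i \in S \setminus \{i^*\}$, and $\loss_t(i) = 1/2$ deterministically for $i \in \bar{S}$. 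The required anchor point $a_t$ can be taken as $1/2$ (the loss of any $\bar{S}$-arm), which carries no information about $i^*$.

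The smoothness constraint is then easy to verify: bipartiteness rules out any edges within $S$, constant losses rule out any contribution from edges within $\bar{S}$, and each of the $K_0 a$ edges between $S$ and $\bar{S}$ contributes $(\loss_t(j) - 1/2)^2 = 1/4$, so $\bloss_t^\top L \bloss_t = K_0 a / 4 \leq C^2$ by our choice of $K_0 = \Theta(C^2/\lambda)$. For the regret bound, arms in $\bar{S}$ yield deterministic losses and therefore provide no useful information about $i^*$; the learner effectively faces an ordinary ACBFS $K_0$-arm bandit instance, to which the classical $\Omega(\sqrt{K_0 T})$ lower bound applies. Substituting the two possible values $K_0 \in \{K, \Theta(C^2/\lambda)\}$ yields $\Omega(\min\{\sqrt{K}, C/\sqrt{\lambda}\}\sqrt{T})$, as claimed.

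The main obstacle I expect is the graph construction for $\lambda \in (0, 1)$: connectivity forces $|E| \geq K - 1$, which is in tension with making $\lambda_2$ small, and the clean bipartite argument above breaks down. In this range I would likely resort to an ad hoc construction (a long path or cycle with pendants tuned to the desired $\lambda_2$), or argue that for $\lambda$ sufficiently small relative to $K$ the condition $C/\sqrt{\lambda} \geq \sqrt{K}$ is already triggered, so only a narrow range of ``small'' $\lambda$ actually needs a separate treatment. A secondary, more routine step is to formalize that the $\bar{S}$ arms are truly uninformative: intuitively each is identified after a single observation, costing an additive $O(K\Delta')$, which is negligible compared to $\sqrt{K_0 T}$ for $T$ sufficiently large.
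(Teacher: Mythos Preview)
Your bipartite construction is aimed at $\lambda \ge 1$, but the theorem is stated only for $\lambda \in (0,1]$; so your primary argument covers essentially none of the required regime (only the single point $\lambda=1$, where $K_{1,K-1}$ is a star with $\lambda_2=1$). You note this yourself as ``the main obstacle,'' but that is not an obstacle in the execution---it is the entire content of the theorem. The paper's proof addresses precisely this regime via an ``octopus'' graph: a center node with $(K-1)/d$ paths of length $d$ emanating from it. This is a tree with diameter $2d$, and one can show $\lambda_2 = \Theta(1/d^2)$, so choosing $d$ appropriately hits any target $\lambda\in(0,1]$ while leaving $K$ free. Your vague reference to ``a long path or cycle with pendants'' is heading toward this, but the details matter.

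In particular, your Bernoulli scheme on $S$ with constant losses on $\bar S$ does \emph{not} transfer to tree-like graphs. On a path or octopus, neighboring $S$-nodes would carry independent $\{0,1\}$ losses, each such edge contributing $\Theta(1)$ to $\bloss_t^\top L\bloss_t$, and you would exhaust the budget long before $|S|$ reaches $\Theta(C^2/\lambda)$. The paper instead assigns losses that vary \emph{gradually} along each tentacle (increments of order $C/\sqrt{K}$ per edge), so that nodes at distance $\Theta(d)$ from the center carry losses of magnitude $\tfrac12 \pm \min\{1, Cd/\sqrt{K}\}$ while $\bloss_t^\top L\bloss_t$ stays within budget. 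Moreover, all $\Theta(K)$ ``faraway'' nodes receive the \emph{same} random loss each round, except a single hidden arm that is shifted by $\Theta(\sqrt{K/T})$; this keeps within-$S$ edges from contributing and lets the standard lower bound apply to the full set of faraway nodes. Finally, your proposed escape hatch---``when $C/\sqrt{\lambda}\ge\sqrt{K}$ just prove $\Omega(\sqrt{KT})$''---does not avoid the construction: you still must exhibit a $K$-node graph with $\lambda_2\in[c_1\lambda,c_2\lambda]$ and a loss sequence obeying $\bloss_t^\top L\bloss_t\le C$, so the graph-construction problem remains.
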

This theorem matches \eqref{eq:cormin}, assuming that $C_t=C,L_t=L$ for all 
$t$, and that $\lambda_2(L)=\Ocal(1)$. Note that the latter assumption is 
generally the interesting regime for $\lambda_2(L)$ (for example, 
$\lambda_2(L)\leq 
1$ as long as there is \emph{some} node connected by a single edge). The proof 
is based on considering an ``octopus'' graph, composed of long threads 
emanating from one central node, and applying a standard bandit lower bound 
strategy on the nodes at the ends of the threads.

\subsection{Multiple connected components}\label{app:multcon}
The previous results of this section need the graph represented by $L_t$ to be
connected, in order for the guarantees to be non-vacuous. This is not just an 
artifact of the analysis: If the graph is not connected, at least some arms can 
have losses which are arbitrarily different than other arms, and the 
anchor point side information is not necessarily useful. Indeed, if there are multiple connected components, then $\lambda_2=0$ and our bounds become trivial.
Nevertheless, we now show it is still possible to get improved regret performance in some cases, as long as the learner is provided with anchor point information on each connected component of the graph.

We assume that at every round $t$, there is some graph defined over the arms, with 
edge set $E_t$. However, here we assume that this graph may have multiple 
connected components (indexed by $s$ in some set $\Ccal_t$). For each connected 
component $s$, with associated Laplacian $L_t(s)$, we assume the learner has access to an anchor point $m_{t}(s)$. Unlike the case discussed previously, here the 
anchor points may be different at different components, so a simple shifting of 
the losses (as done in \secref{sec:laplace}) no longer suffices to get a good 
bound. However, the anchor points still allow us to compute some 
interval, in which each loss must lie, which in turn can be plugged into the 
algorithmic reduction presented in \secref{sec:meps}. This is formalized in the 
following lemma.
\begin{lemma}
\label{l:mcc}
For any connected component $s\in \Ccal_t$, and any arm $i$ in that component, 
$
\big|\ell_t(i)-m_{t}(s)\big| \leq C_{t}\Big/\sqrt{\lambda_2\big(L_{t}(s)\big)}
$.
\end{lemma}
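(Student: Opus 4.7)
The plan is to exploit a standard spectral bound for the Laplacian restricted to the connected component $s$, which forces the loss vector on $s$ to be close (in Euclidean norm) to a constant vector; the anchor point then gives us access to that constant.

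First, I would let $k^\star \in s$ be an arm with $\ell_t(k^\star) = m_t(s)$, whose existence is exactly the defining property of the anchor point, and let $\bloss^{(s)}\in\reals^{|s|}$ denote the restriction of $\bloss_t$ to the arms in component $s$. Every edge of the underlying graph lies in exactly one connected component, so the quadratic form splits across components, and in particular
\[
(\bloss^{(s)})^\top L_t(s)\, \bloss^{(s)} \;=\; \sum_{(i,j)\in E_t,\; i,j\in s}\big(\ell_t(i)-\ell_t(j)\big)^2 \;\leq\; \bloss_t^\top L_t \bloss_t \;\leq\; C_t^2.
\]

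Next, I would decompose $\bloss^{(s)} = \bar\ell\,\mathbf{1} + \bu$ with $\bar\ell = \frac{1}{|s|}\sum_{j\in s}\ell_t(j)$ and $\bu\perp\mathbf{1}$. Because the subgraph induced on $s$ is connected, $\mathbf{1}$ spans the null space of $L_t(s)$, so the variational characterization of the second-smallest eigenvalue gives
\[
\lambda_2\big(L_t(s)\big)\,\|\bu\|^2 \;\leq\; \bu^\top L_t(s)\, \bu \;=\; (\bloss^{(s)})^\top L_t(s)\, \bloss^{(s)} \;\leq\; C_t^2,
\]
so $\|\bu\|\leq C_t/\sqrt{\lambda_2(L_t(s))}$. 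In particular, $|\ell_t(i) - \bar\ell| = |u_i| \leq \|\bu\|$ for every $i\in s$, and the same bound holds for $k^\star$.

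Finally, I would translate the bound to the anchor point. The cleanest route is the generalized Cauchy--Schwarz inequality in the Laplacian inner product: since $\be_i - \be_{k^\star}$ lies in the range of $L_t(s)$, one has $(\ell_t(i)-\ell_t(k^\star))^2 = \langle \bloss^{(s)}, \be_i-\be_{k^\star}\rangle^2 \leq \big((\bloss^{(s)})^\top L_t(s)\bloss^{(s)}\big)\cdot(\be_i-\be_{k^\star})^\top L_t(s)^+(\be_i-\be_{k^\star})$, and the latter effective-resistance factor is controlled in terms of $1/\lambda_2(L_t(s))$. This is the step I expect to be the main technical obstacle, since it is where the constant in the bound is actually pinned down; the looser alternative, namely applying the bound on $|\ell_t(i)-\bar\ell|$ at both $i$ and $k^\star$ and invoking the triangle inequality, already yields the claim up to a constant factor and suffices for all downstream uses in combination with the reduction of \secref{sec:meps}.
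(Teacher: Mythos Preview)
Your argument is sound but differs from the paper's precisely at the step you flag. The paper does not center at the mean $\bar\ell$; it centers at the anchor. After shifting so that the anchor coordinate is $0$, it \emph{deletes} that coordinate: the remaining $(|s|-1)$-vector $\tilde\bloss=(\ell_t(j)-m_t(s))_{j\neq k^\star}$ satisfies $\tilde\bloss^\top L(k^\star,k^\star)\,\tilde\bloss=(\bloss^{(s)})^\top L_t(s)\,\bloss^{(s)}\le C_t^2$, where $L(k^\star,k^\star)$ is the principal submatrix with the $k^\star$-th row and column removed. Invoking \lemref{l:minor} to say this minor has smallest eigenvalue $\lambda_2\big(L_t(s)\big)$ gives $\|\tilde\bloss\|_2\le C_t/\sqrt{\lambda_2}$ and hence the $\infty$-norm bound with constant $1$ directly---no triangle inequality, no effective resistance. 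So the obstacle you identify simply does not arise once you center at the anchor rather than at $\bar\ell$. By contrast, your Cauchy--Schwarz route via $L^+$ cannot recover the constant $1$ in general (the bound $(\be_i-\be_{k^\star})^\top L^+(\be_i-\be_{k^\star})\le 2/\lambda_2$ is already tight for $K=2$), and the triangle-inequality fallback loses a factor of $2$; either is adequate for the downstream use in Corollary~\ref{corr:reduc}, but neither matches the stated constant. If you adopt the paper's route, scrutinize \lemref{l:minor}: Cauchy interlacing only guarantees that the smallest eigenvalue of $L(i,i)$ lies in $[0,\lambda_2(L)]$, not that it equals $\lambda_2(L)$.
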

%\note{Some notes:
%	\begin{itemize}
%		\item Result assumes adversary spends all budget on that connected 
%component. Possibly can be tightened? But not clear how to use in results 
%which 
%follow. 
%		\item Proof might appear loose due to upper bounding $\infty$-norm by 
%$2$-norm. However, for one connected component, the result seems optimal up to 
%constants, since in the Octopus graph lower bound construction, the maximal 
%loss attains such a value.
%	\end{itemize}
%}
Based on this lemma, we know that any arm at any connected component $s$ has 
values in 
\[
\left[m_{t}(s)-\frac{C_t}{\sqrt{\lambda_2\big(L_{t}(s)\big)}},m_{t}(s)
+\frac{C_t}{\sqrt{\lambda_2\big(L_{t}(s)\big)}}\right].
\]
Using this and applying Corollary~\ref{corr:reduc}, we have the following result.
\begin{theorem}
For any fixed arm $j$, the algorithm described in Corollary~\ref{corr:reduc} satisfies
\[
\E\left[\sum_{t}\ell_t(I_t)-\sum_t \ell_t(j)\right]~\leq~ 
\frac{\log(K)}{\eta}+\frac{\eta}{2}\sum_{t=1}^{T}\left(
\frac{C_t^2}{\lambda_2\big(L_{t}(s_{\min})\big)}+\sum_{s\in 
G_t}\frac{C_t^2}{\lambda_2\big(L_{t}(s)\big)}
N_t(s)\right)
\]
where $N_{t}(s)$ is the number of arms in connected component $s$, and 
$s_{\min}$ is 
a connected component $s$ for which 
$m_{t}(s)-C_t\Big/\sqrt{\lambda_2\big(L_{t}(s)\big)}$ is smallest.
\end{theorem}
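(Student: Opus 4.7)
The plan is to derive the theorem as an immediate application of Corollary~\ref{corr:reduc}, with \lemref{l:mcc} supplying valid per-arm side information. Concretely, for each arm $i$ belonging to connected component $s\in\Ccal_t$, I would define
\[
m_t(i) := m_t(s),\qquad \epsilon_t(i) := C_t\big/\sqrt{\lambda_2\big(L_t(s)\big)}.
\]
By \lemref{l:mcc}, $|\loss_t(i)-m_t(i)|\leq \epsilon_t(i)$, so these pairs constitute a legitimate input to the meta-algorithm of \secref{sec:meps}.

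The key structural observation is that both $m_t(i)$ and $\epsilon_t(i)$ as defined depend on $i$ only through the component $s$ containing it. Consequently, the ``good/bad'' partition of arms induced by the reduction is constant on each component: the whole component $s$ is good exactly when
\[
m_t(s) - C_t\big/\sqrt{\lambda_2(L_t(s))} \;\leq\; m_t(s_{\min}) + C_t\big/\sqrt{\lambda_2(L_t(s_{\min}))},
\]
where $s_{\min}$ is the component named in the theorem statement. Thus the distinguished arm $j_t$ of the reduction lies in $s_{\min}$, giving $\epsilon_t(j_t)^2 = C_t^2/\lambda_2(L_t(s_{\min}))$, and if we let $G_t$ (with the theorem's overloaded meaning) index the set of good components, then the arm-level sum appearing in Corollary~\ref{corr:reduc} factors as
\[
\sum_{i \text{ good}} \epsilon_t(i)^2 \;=\; \sum_{s\in G_t} N_t(s)\,\frac{C_t^2}{\lambda_2\big(L_t(s)\big)}.
\]

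Substituting these two identities into the bound of Corollary~\ref{corr:reduc} produces the claimed inequality directly. Since \lemref{l:mcc} is assumed and Corollary~\ref{corr:reduc} has already been established, there is no new quantitative estimate to derive; the entire argument is a bookkeeping exercise layered on top of the reduction. The only step warranting care is verifying that the reduction's good/bad classification really is component-constant, which is immediate from the component-constancy of the pair $(m_t(i),\epsilon_t(i))$ and from the tie-breaking rule used to define $j_t$. I therefore do not expect any substantive obstacle beyond this verification.
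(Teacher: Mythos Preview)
Your proposal is correct and follows exactly the route the paper indicates: the paper's own ``proof'' is just the sentence ``Using this [\lemref{l:mcc}] and applying Corollary~\ref{corr:reduc}, we have the following result,'' and your write-up fills in precisely that bookkeeping (defining $m_t(i)=m_t(s)$, $\epsilon_t(i)=C_t/\sqrt{\lambda_2(L_t(s))}$, observing component-constancy of the good/bad partition so that $j_t\in s_{\min}$, and factoring the good-arm sum as $\sum_{s\in G_t} N_t(s)\,C_t^2/\lambda_2(L_t(s))$). There is nothing to add beyond what you have.
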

This allows us to get results which depend on the Laplacians $L_{t}(s)$, even 
when these sub-graphs are disconnected. We note however that this theorem does 
not recover the results of \secref{sec:laplace} when there is only one 
connected component, as we get 
$
	\frac{\log K}{\eta}+\frac{\eta(K+1)}{2}\sum_{t=1}^{T}\frac{C_t^2}{\lambda_2(L_t)}
$ where the $K+1$ factor is spurious. The reason for this looseness is that we 
go through a coarse upper bound on the magnitude of the losses, and lose the 
 dependence on the Laplacian along the way. This is not just an 
 artifact of the analysis: Recall that the algorithmic reduction proceeds by 
 using transformations of the actual losses, and these transformations may not 
 satisfy the same Laplacian constraints as the original losses. Getting a 
 better algorithm with improved regret performance in this particular setting 
 is left to future work. 

%\newpage

% \subsubsection*{Acknowledgments}
%\small
\bibliographystyle{plain}
\bibliography{mybib}
%\normalsize

%\newpage

\appendix

\section{Proof of \thmref{thm:reduc}}
The proof consists mainly of proving \eqref{eq:reduc}. The in-expectation bounds follows by applying expectations on both sides of the inequality, and noting that conditioned on rounds $1,\ldots,t-1$, the conditional expectation of $\ell_t(I_t)$ equals $\sum_{i=1}^{K}p_t(i)\ell_t(i)$, and the conditional expectation of $\tilde{\ell}_t(\tilde{I}_t)$ equals $\sum_{i=1}^{K}\tilde{p}_t(i)\tilde{\ell}_t(i)$. Also, the statement on the range of each $\tilde{\ell}_t(i)$ is immediate from the definition of $\tilde{\ell}_t(i)$ and \eqref{eq:good} below.

We now turn to prove \eqref{eq:reduc}. By adding and subtracting terms, it is sufficient to prove that
\begin{align}
\nonumber
	\sum_{t} \sum_i & \big( p_t(i)\ell_t(i)-m_t(j_t)+\epsilon_t(j_t)\big)-\sum_t \big(\ell_t(a)-m_t(j_t)+\epsilon_t(j_t)\big)
\\ &\leq
\label{eq:reducproof}
	\sum_{t,i}\tilde{p}_t(i)\tilde{\ell}_t(i)-\sum_t \tilde{\ell}_t(a)~.
\end{align}
We will rely on the following facts, which are immediate from the definition of good and bad arms: Any bad arm $i$ must satisfy 
\begin{equation}\label{eq:bad}
\ell_t(i) \geq m_t(j_t)+\epsilon_t(j_t)
\end{equation}
and any good arm $i$ must satisfy
\begin{equation}\label{eq:good}
m_t(j_t)-\epsilon_t(j_t) \leq \ell_t(i) \leq m_t(j_t)+\epsilon_t(j_t)+2\epsilon_t(i)~.
\end{equation}
Based on this, we have the following two claim, whose combination immediately implies~\eqref{eq:reducproof}.

\textbf{Claim 1.} For any fixed arm $a$, $\tilde{\ell}_t(a) \leq \ell_t(a)-m_t(j_t)+\epsilon_t(j_t)$.

To show Claim~1, we consider separately the case where $a$ is a bad arm at round $t$, and where $a$ a good arm at round $t$. If $a$ is a bad arm, then $\tilde{\ell}_t(a)=2\epsilon_t(j_t)$, which is at most $\ell_t(a)-m_t(j_t)+\epsilon_t(j_t)$ by \eqref{eq:bad}. Otherwise, if $a$ is a good arm at round $t$, the observation follows by definition of $\tilde{\ell}_t$.

\textbf{Claim 2.}
\[
	\sum_{i=1}^{K}p_t(i)\ell_t(I_t)-m_t(j_t)+\epsilon_t(j_t)\leq 
	\sum_{i=1}^{K}\tilde{p}_t(i)\tilde{\ell}_t(i)
\]
where
		\[
		p_t(i)=\begin{cases} \tilde{p}_t(i)& \text{$i\neq j_t$ and $i$ is good}\\
		0 & \text{$i\neq j_t$ and $i$ is bad}\\
		\tilde{p}_t(j_t)+\sum_{\text{$i$ is bad}}p(i)&i=j_t. \end{cases}
		\]
	To show Claim~2, recall that if $i$ is a good arm, then $\tilde{\ell}_t(i) = 
	\ell_t(i)-m_t(j_t)+\epsilon_t(j_t)$, and otherwise, we have  
	$\tilde{\ell}_t(I_t) = 2\epsilon_t(j_t)\geq 
	\ell_t(j_t)-m_t(j_t)+\epsilon_t(j_t)$ (since $\ell_t(j_t)\leq 
	m_t(j_t)+\epsilon_t(j_t)$ by definition). Letting $G_t$ denote the set of 
	good arms at round $t$, we have:
	\begin{align*}
	\sum_i \tilde{p}_t(i)\tilde{\ell}_t(i)&= \sum_{i\in 
	G_t}\tilde{p}_t(i)\tilde{\ell}_t(i)+\sum_{i~\text{bad}}\tilde{p}_t(i)\tilde{\ell}_t(i)\\
	 &= \sum_{i\in 
	G_t}\tilde{p}_t(i)\big(\ell_{t}(i)-m_t(j_t)+\epsilon_t(j_t)\big)+\sum_{i~\text{is bad}}\tilde{p}_t(i)\,2\,\epsilon_t(j_t)\\
	&\geq \sum_{i\in 
	G_t}\tilde{p}_t(i)\big(\ell_{t}(i)-m_t(j_t)+\epsilon_t(j_t)\big)+\sum_{i~\text{is bad}}\tilde{p}_t(i)\big(\ell_{t}(j_t)-m_t(j_t)+\epsilon_t(j_t)\big)\\
	&= \sum_i p_t(i)\ell_t(I_t)-m_t(j_t)+\epsilon_t(j_t)~.
	\end{align*}
Combining the two claims above, and summing over $t$, we get \eqref{eq:reducproof} as required.

\section{Proof of \thmref{thm:epslowbound}}
Suppose the learner uses some (possibly randomized) strategy, and let $A$ be a 
random variable denoting its random coin flips. Our goal is to provide lower 
bounds on 
\[
\sup_{\bloss_1,\dots,\bloss_T}
\left(\E_A\left[\sum_{t=1}^{T}\ell_t(I_t)\right]-\min_{j=1,\dots,K} \sum_{t=1}^{T}\ell_t(j)\right)
\]
where the expectation is with respect to the learner's (possibly randomized) 
strategy.
Clearly, this is lower bounded by 
\[
\E_{J,L}\E_A A\left[\sum_{t=1}^{T}\ell_t(I_t)-\sum_{t=1}^{T}\ell_t(J)\right],
\]
where $\E_{J,L}$ signifies expectation over some distribution over indices $J$ and losses $\{\ell_i(t)\}$. By Fubini's theorem, this equals
\[
\E_A \E_{J,L}\left[\sum_{t=1}^{T}\ell_t(I_t)-\sum_{t=1}^{T}\ell_t(J)\right] 
~\geq~ \inf_A 
\E_{\{\ell_i(t)\}_{i,t},j}\left[\sum_{t=1}^{T}\ell_t(I_t)-\sum_{t=1}^{T}\ell_t(j)\right],
\]
where $\inf_A$ refers an infimum over the learner's random coin flips. Thus, we 
need to provide some distribution over indices $J$ and losses, so that for any 
\emph{deterministic} learner,
\begin{equation}\label{eq:boundtoshow}
 \E \left[\sum_{t=1}^{T}\ell_t(I_t)-\sum_{t=1}^{T}\ell_t(J)\right]
\end{equation}
is lower bounded as stated in the theorem.

The proof will be composed of two constructions, depending on whether we are in 
the bandit of full information setting, and whether $\frac{\max_j 
\epsilon(j)^2}{\sum_{j}\epsilon(j)^2}$ is larger or smaller than $1/4$.

\subsection{The case $\frac{\max_j \epsilon(j)^2}{\sum_{j}\epsilon(j)^2}\leq 
\frac{1}{4}$ with bandit feedback}

For this case, we will consider the following distribution: Let $J$ be distributed on $\{1,\ldots,K\}$ according to the probability distribution $p(1),\ldots,p(k)$ (to be specified later). Conditioned on any $J=j$, we define the distribution over losses as follows, independently for each round $t$ and index $i$:
\begin{itemize}
	\item If $i\neq j$, then $\ell_t(i)$ equals $\max_r\epsilon(r)+\epsilon(i)$ w.p, $\frac{1}{2}$, and $\max_r\epsilon(r)-\epsilon(i)$ w.p. $\frac{1}{2}$.
	\item If $i=j$, then $\ell_t(i)$ equals $\max_r\epsilon(r)+\epsilon(i)$ w.p. $\frac{1-\delta(i)}{2}$, and $\max_r\epsilon(r)-\epsilon(i)$ w.p. $\frac{1+\delta(i)}{2}$.
\end{itemize}
Also, let $\E_j,\Pr_j$ denote expectation and probabilities (over the space of possible losses and indices) conditioned on the event $J=j$. With this construction, we note that $\E_j[\ell_t(j)] = \max_r\epsilon(r)-\delta(j)\delta(j)$, and $\E_j[\ell_t(i)]=\max_r\epsilon(r)$ if $i\neq j$. As a result,
\[
\E_j[\ell_t(I_t)-\ell_t(j)]={\Pr}_{j}(I_t\neq 
j)\cdot\E_j[\ell_t(I_t)-\ell_t(j)|I_t\neq j)] = {\Pr}_j(I_t\neq 
j)\epsilon(j)\delta(j),
\]
and therefore \eqref{eq:boundtoshow} equals
\begin{align}
\sum_{j=1}^{K}p(j)\E_j\left[\sum_{t=1}^{T}(\ell_t(I_t)-\ell_t(j))\right] &~=~
\sum_{j=1}^{K}p(j)\sum_{t=1}^{T}{\Pr}_j(I_t\neq 
j)\epsilon(j)\delta(j)\notag\\&~=~ 
\sum_{j=1}^{K}p(j)\epsilon(j)\delta(j)\sum_{t=1}^{T}\left(1-{\Pr}_j(I_t=j)\right).\label{eq:toshow1}
\end{align}

Let $\Pr_{0}$ denote the probability distribution over $\{\ell_t(i)\}_{t,i}$, 
where for \emph{any} $i$ and $t$, $\ell_t(i)$ is independent and equals $\max_r 
\epsilon(r)\pm \epsilon(i)$ with equal probability (note that this induces a 
probability on any event which is a deterministic function of the loss 
assignments, such as $I_t=j$ for some $t,j$). By a standard 
information-theoretic argument (see for instance \cite[proof of Lemma 
3.6]{bubeck2012regret}), we have that
\[
|{\Pr}_j(I_t=j)-{\Pr}_0(I_t=j)| ~\leq~ \sqrt{\frac{\E_0[T(j)]}{2}\cdot 
KL\left(\frac{1}{2}\middle|\middle|\frac{1-\delta(j)}{2}\right)},
\]
where $T(j)$ is the number of times arm $j$ was chosen by the learner, and 
$KL\left(\frac{1}{2}\middle|\middle|\frac{1-\delta(j)}{2}\right)=\frac{1}{2}\log\left(\frac{1}{1-\delta^2(j)}\right)$
 is the Kullback-Leibler divergence between Bernoulli distributions with $1/2$ 
and $(1-\delta(j))/2$. Using the easily-verified fact that $\log(1/(1-z))\leq 
2z$ for all $z\in [0,1/2]$, it follows that
\[
|{\Pr}_j(I_t=j)-{\Pr}_0(I_t=j)| ~\leq~ \sqrt{\frac{\E_0[T(j)]\delta^2(j)}{2}},
\]
as long as $\delta^2(j)\leq 1/2$. Plugging this back into \eqref{eq:toshow1}, we get the lower bound
\begin{equation}\label{eq:toshow2}
\sum_{j=1}^{K}p(j)\epsilon(j)\delta(j)\sum_{t=1}^{T}\left(1-{\Pr}_0(I_t=j)-\sqrt{\frac{\E_0[T(j)]\delta^2(j)}{2}}\right)~,
\end{equation}
which is valid as long as $\max_j \delta^2(j)\leq 1/2$. 

Now, for all $j=1,\ldots,K$, we pick 
\[
p(j)=\frac{\epsilon(j)^2}{\sum_j \epsilon(j)^2}~~,~~ \delta(j) = 
\mathbf{1}_{\epsilon(j)>0}\cdot 
\frac{\sqrt{\sum_{j}\epsilon(j)^2}}{\epsilon(j)\sqrt{T}},
\]
(assuming that $\max_j \delta^2(j)\leq 1/2$). Plugging back to \eqref{eq:toshow2}, and letting $J=\{j\in \{1,\ldots,k\}:\epsilon_j>0\}$, we get
\begin{align*}
\sum_{j\in J}&\frac{\epsilon(j)^2}{\sqrt{T\sum_j \epsilon(j)^2}}\cdot 
\sum_{t=1}^{T}\left(1-{\Pr}_0(I_t=j)-\sqrt{\frac{\E_0[T(j)]\sum_j\epsilon(j)^2}{2\epsilon(j)^2T}}\right)\\
&=\sqrt{T\sum_j \epsilon(j)^2}\cdot \sum_{j\in 
J}\frac{\epsilon(j)^2}{T\sum_j\epsilon(j)^2} 
\sum_{t=1}^{T}\left(1-{\Pr}_0(I_t=j)-\sqrt{\frac{\E_0[T(j)]\sum_j\epsilon(j)^2}{2\epsilon(j)^2T}}\right)\\
&\geq \sqrt{T\sum_j \epsilon(j)^2}\left(1-\frac{\max_j 
\epsilon(j)^2}{\sum_j\epsilon(j)^2}-\frac{1}{T}\sum_{t=1}^{T}\left(\sum_{j\in 
J}\frac{\epsilon(j)^2}{\sum_{j\in J} \epsilon(j)^2}\cdot 
\sqrt{\frac{\E_0[T(j)]\sum_j\epsilon(j)^2}{2\epsilon(j)^2T}}\right)\right)\\
&\geq \sqrt{T\sum_j \epsilon(j)^2}\left(1-\frac{\max_j 
\epsilon(j)^2}{\sum_j\epsilon(j)^2}-\sqrt{\frac{\sum_{j\in 
J}\E_0[T(j)]}{2T}}\right)\\
&\geq \sqrt{T\sum_j \epsilon(j)^2}\left(1-\frac{\max_j 
\epsilon(j)^2}{\sum_j\epsilon(j)^2}-\sqrt{\frac{1}{2}}\right),
\end{align*}
where in the second-to-last step we used the fact that $\sum_{j\in 
J}\frac{\epsilon(j)^2}{\sum_{j\in J \epsilon(j)^2}}\sqrt{a_j} \leq 
\sqrt{\sum_{j\in J}\frac{\epsilon(j)^2}{\sum_{j\in J \epsilon(j)^2}}a_j}$ for 
any non-negative $a_j$, which follows from Jensen's inequality and the fact 
that $\frac{\epsilon(j)^2}{\sum_{j\in J}\epsilon(j)^2}$ represents a 
probability distribution over the indices in $J$. Since we assume that 
$\frac{\max_j\epsilon(j)^2}{\sum_{j\in J}\epsilon(j)^2}\leq \frac{1}{4}$, the 
above is at least
$
0.04\sqrt{T\sum_j \epsilon(j)^2},
$
so we get overall that
\[
 \E \left[\sum_{t=1}^{T}\ell_t(I_t)-\sum_{t=1}^{T}\ell_t(J)\right] ~\geq~ 
 0.04\sqrt{T\sum_j \epsilon(j)^2},
\]
under the assumption that $\frac{\max_j\epsilon(j)^2}{\sum_{j\in 
J}\epsilon(j)^2}\leq \frac{1}{4}$ and that $T$ is sufficiently large so that 
$\max_j   \frac{\mathbf{1}_{\epsilon(j)>0} 
\sum_{j}\epsilon(j)^2}{\epsilon(j)^2T} ~\leq~ \frac{1}{2}$. Note that the 
latter condition indeed holds under the theorem's conditions.

\subsection{The case $\frac{\max_j \epsilon(j)^2}{\sum_{j}\epsilon(j)^2}\geq 
\frac{1}{4}$ with bandit feedback or with full information feedback}
We now turn to consider either the full information setting, or the bandit 
setting when $\frac{\max_j \epsilon(j)^2}{\sum_{j}\epsilon(j)^2}\geq 
\frac{1}{4}$. In the latter case, we note that $\sum_j \epsilon(j)^2$ is at 
most a constant factor larger than $\max_j \epsilon(j)^2$, so it is sufficient 
to prove a lower bound of $c\sqrt{T\max_j\epsilon(j)^2}$ for some universal 
positive $c$. In fact, we will prove this lower bound regardless of the values 
of $\epsilon(1),\ldots,\epsilon(K)$, and even in the easier full information 
case. Therefore, the same construction will give us a lower bound for both the 
full information setting, and the bandit setting when $\frac{\max_j 
\epsilon(j)^2}{\sum_{j}\epsilon(j)^2}\geq \frac{1}{4}$.

To lower bound \eqref{eq:boundtoshow}, we will use the following distribution over losses and $J$, letting $i_{\max}$ be some arbitrary index in $\arg\max_{i\in \{1,\ldots,K\}}\epsilon(i)$, and $\delta\in (0,1/2]$ be some parameter to be chosen later:
\begin{itemize}
	\item For any $t=1,\ldots,T$ and $i\neq i_{\max}$, we fix $\ell_t(i)=\epsilon(i_{\max})$.
	\item We pick a value $z$ uniformly at random from $\{-1,1\}$. Then, for all $t=1,\ldots,T$, we let $\ell_t(i_{\max})$ equal $2\epsilon(i_{\max})$ with probability $\frac{1-z\delta}{2}$, and $0$ with probability $\frac{1+z\delta}{2}$. Also, if $z=1$, we let $J=1$, and if $z=-1$, we let $J=2$.
\end{itemize}
Clearly, this loss assignment is valid (as $|\ell_t(i)-\epsilon(i_{\max})|\leq 
\epsilon(i)$ for all $t,i$). Intuitively, we let all arms but $i_{\max}$ have a 
fixed loss of $1/2$, and randomly choose $i_{\max}$ to be either a ``good'' arm 
or a ``bad'' arm compared to the other arms (with expected value 
$(1-z\delta)\epsilon(i_{\max})$, which can be wither 
$(1+\delta)\epsilon(i_{\max})$ or $(1-\delta)\epsilon(i_{\max})$). By letting 
$\delta=\Theta(1/\sqrt{T})$, we ensure that the algorithm cannot distinguish 
between these two events, and therefore will ``err'' and pick 
$\Omega(\epsilon(i_{\max})/\sqrt{T})$-suboptimal arms with at least constant 
probability throughout the $T$ rounds, hence incurring 
$\Omega(\epsilon(i_{\max})\sqrt{T})$ regret.

To make this more formal, let $\E_+,{\Pr}_+$ denote expectations and probabilities conditioned on $z=1$, and $\E_-,{\Pr}_-$ denote expectations and probabilities conditioned on $z=-1$. With this notation,  \eqref{eq:boundtoshow} can be written as
\begin{align}
&\frac{1}{2}\cdot\E_+\left[\sum_{t=1}^{T}\left(\ell_t(I_t)-\ell_t(1)\right)\right]+\frac{1}{2}\cdot\E_-\left[\sum_{t=1}^{T}\left(\ell_t(I_t)-\ell_t(2)\right)\right]\notag\\
&=\frac{1}{2}\sum_{t=1}^{T}{\Pr}_+(I_t\neq 
1)\cdot\delta\epsilon(i_{\max})+\frac{1}{2}\sum_{t=1}^{T}{\Pr}_-(I_t=1)\cdot\delta\epsilon(i_{\max})\notag\\
&=\frac{\delta\epsilon(i_{\max})}{2}\sum_{t=1}^{T}\left(1-{\Pr}_+(I_t=1)+{\Pr}_-(I_t=1)\right)\notag\\
&\geq 
\frac{\delta\epsilon(i_{\max})}{2}\sum_{t=1}^{T}\left(1-\left|{\Pr}_+(I_t=1)-{\Pr}_-(I_t=1)\right|\right)\label{eq:toshow3}
\end{align}
Noting that $I_t$ (as a random variable) depends only on the random loss 
assignments of arm $1$, and applying Pinsker's inequality, we have that 
$\left|{\Pr}_+(I_t=1)-{\Pr}_-(I_t=1)\right|\leq 
\sqrt{\frac{1}{2}KL(P^+_t||P^-_t)}$, where $KL(P^+_t||P^-_t)$ is the 
Kullback-Leibler divergence between the distributions of the losses of arm $1$ 
in rounds $1,2,\ldots,t-1$, under $z=-1$ and under $z=1$. Since the losses are 
independent across rounds, we can apply the chain rule and get that
\[
\left|{\Pr}_+(I_t=1)-{\Pr}_-(I_t=1)\right|~\leq~ 
\sqrt{\frac{1}{2}KL(P^+_t||P^-_t)}~\leq~ 
\sqrt{\frac{t-1}{2}KL\left(\frac{1-\delta}{2}\middle|\middle|\frac{1+\delta}{2}\right)},
\]
where $KL\left(\frac{1-\delta}{2}\middle|\middle|\frac{1+\delta}{2}\right)$ is the Kullback-Leibler divergence between Bernoulli distributions with parameters $\frac{1-\delta}{2}$ and $\frac{1+\delta}{2}$. This in turn equals 
\[
\sqrt{\frac{t-1}{2}\cdot \delta\log\left(\frac{1+\delta}{1-\delta}\right)}~\leq~ \sqrt{\frac{3T}{2}\delta^2},
\]
where we used the easily-verified fact that $\log\left(\frac{1+\delta}{1-\delta}\right)\leq 3\delta$ for all $\delta\in (0,1/2]$. Plugging this back into \eqref{eq:toshow3}, we get overall that
\[
\E\left[\sum_{t=1}^{T}\left(\ell_t(I_t)-\ell_t(J)\right)\right] ~\geq~ 
\frac{\delta\epsilon(i_{\max})}{2}\cdot 
T\left(1-\delta\sqrt{\frac{3T}{2}}\right)
\]
Picking $\delta = 1/2\sqrt{T}$ (which is valid since it is in $(0,1/2]$ for all 
$T$), we get a lower bound of $c\epsilon(i_{\max})\sqrt{T}= \sqrt{T\max_j 
\epsilon(j)^2}$ for some positive $c$ as required.

\section{Proof of \thmref{th:upper}}
We start by recalling the classical analysis of the Exp3 regret.
\begin{lemma}
\label{lem:exp3}
For losses $\loss_t(i) \in [0,1]$, the regret of the Exp3 algorithm run with parameter $\eta > 0$ satisfies
\[
\E\left[\sum_{t=1}^{T} \loss_t(I_t)\right] - \min_{k=1,\dots,K}\sum_{t=1}^{T} \loss_t(k)
\le
\frac{\log K}{\eta} + \frac{\eta}{2}\sum_{t=1}^T \norm{\bloss_t}^2~.
\]
\end{lemma}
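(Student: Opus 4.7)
The plan is to reproduce the textbook potential-based analysis of Exp3, adapted to give the $\norm{\bloss_t}^2$-style second-order term rather than the cruder $K$-dependent bound.

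I would first fix notation. Let $w_t(i)$ be Exp3's weights with $w_1(i)=1$, let $p_t(i)=w_t(i)/W_t$ where $W_t=\sum_i w_t(i)$, sample $I_t\sim p_t$, form the importance-weighted loss estimator $\hat{\loss}_t(i)=\loss_t(i)\bone\{I_t=i\}/p_t(i)$, and update $w_{t+1}(i)=w_t(i)\exp(-\eta\hat{\loss}_t(i))$. Note $\hat\loss_t(i)\ge 0$ since $\loss_t(i)\ge 0$, and $\hat\loss_t$ is conditionally unbiased: $\E[\hat{\loss}_t(i)\mid\mathcal{F}_{t-1}]=\loss_t(i)$.

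Next I would carry out the standard potential argument. Write $W_{t+1}/W_t=\sum_i p_t(i)\exp(-\eta\hat{\loss}_t(i))$. Since $\eta\hat{\loss}_t(i)\ge 0$, the (easily-verified) inequality $e^{-x}\le 1-x+x^2/2$ for $x\ge 0$ gives
\[
\frac{W_{t+1}}{W_t}~\le~1-\eta\sum_i p_t(i)\hat{\loss}_t(i)+\tfrac{\eta^2}{2}\sum_i p_t(i)\hat{\loss}_t(i)^2.
\]
Applying $\log(1+u)\le u$ and telescoping over $t=1,\dots,T$,
\[
\log\frac{W_{T+1}}{W_1}~\le~-\eta\sum_{t=1}^T\sum_i p_t(i)\hat{\loss}_t(i)+\tfrac{\eta^2}{2}\sum_{t=1}^T\sum_i p_t(i)\hat{\loss}_t(i)^2.
\]
For the matching lower bound, for any fixed arm $k$, $W_{T+1}\ge w_{T+1}(k)=\exp\bigl(-\eta\sum_t\hat{\loss}_t(k)\bigr)$ and $W_1=K$, so $\log(W_{T+1}/W_1)\ge -\eta\sum_t\hat{\loss}_t(k)-\log K$. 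Rearranging yields the pathwise inequality
\[
\sum_{t=1}^T\sum_i p_t(i)\hat{\loss}_t(i)-\sum_{t=1}^T\hat{\loss}_t(k)~\le~\frac{\log K}{\eta}+\frac{\eta}{2}\sum_{t=1}^T\sum_i p_t(i)\hat{\loss}_t(i)^2.
\]

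Finally I would take expectations and identify the quantities. Because only the $I_t$-th entry of $\hat{\loss}_t$ is nonzero, $\sum_i p_t(i)\hat{\loss}_t(i)=\loss_t(I_t)$ and $\sum_i p_t(i)\hat{\loss}_t(i)^2=\loss_t(I_t)^2/p_t(I_t)$. By unbiasedness, $\E[\hat{\loss}_t(k)]=\loss_t(k)$, and for the second-order term
\[
\E\!\left[\loss_t(I_t)^2/p_t(I_t)\,\big|\,\mathcal{F}_{t-1}\right]=\sum_i p_t(i)\,\frac{\loss_t(i)^2}{p_t(i)}=\norm{\bloss_t}^2,
\]
which is deterministic under the oblivious adversary assumption. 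Taking total expectations and minimizing over $k$ yields the claimed bound. There is no real obstacle here — the one detail worth noting is that the Taylor-type bound $e^{-x}\le 1-x+x^2/2$ must be applied for all $x\ge 0$ (not just small $x$), since $\eta\hat{\loss}_t(i)$ can be as large as $\eta/p_t(I_t)$; this inequality is easily checked by two differentiations.
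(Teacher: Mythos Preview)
Your proof is correct and follows essentially the same potential-based argument as the paper: the same Taylor-type bound $e^{-x}\le 1-x+x^2/2$ for $x\ge 0$, the same telescoping of $\log(W_{t+1}/W_t)$, the same lower bound via $W_{T+1}\ge w_{T+1}(k)$, and the same identification of the second-order term with $\norm{\bloss_t}^2$ after taking conditional expectations. The only cosmetic difference is that you simplify $\sum_i p_t(i)\hat{\loss}_t(i)$ to $\loss_t(I_t)$ pathwise before taking expectations, whereas the paper takes expectations of each $\hat{\loss}_t(i)$ separately; both routes arrive at the same bound.
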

\begin{proof}
The proof is as follows,
\begin{align*}
\frac{W_{t+1}}{W_t}
&=
\sum_{i=1}^K \frac{w_{t+1}(i)}{W_t}
\\&=
\sum_{i=1}^K \frac{w_{t}(i)}{W_t}\,\exp\bigl(-\eta\,\hloss_t(i)\bigr)
\\&=
\sum_{i=1}^K p_{t}(i)\exp\bigl(-\eta\,\hloss_t(i)\bigr)
\\ &\le
\sum_{i=0}^K p_{t}(i)\left(1 - \eta\,\hloss_t(i) + \frac{\bigl(\eta\,\hloss_t(i)\bigr)^2}{2}\right)\\
& \text{(using $e^{-x} \leq 1-x+x^2/2$ for all $x \ge 0$)}
\\ &\le
1 - \eta\sum_{i=1}^K p_{t}(i)\hloss_t(i) + \frac{\eta^2}{2}\sum_{i=1}^K p_{t}(i)\hloss_t(i)^2~.
\end{align*}
Taking logs, upper bounding, and summing over $t=1,\dots,T$ yields
\[
\log\frac{W_{T+1}}{W_1}
\le
- \eta\sum_{t=1}^T \sum_{i=1}^K p_{t}(i)\hloss_t(i)
+ \frac{\eta^2}{2} \sum_{t=1}^T\sum_{i=1}^K p_{t}(i)\hloss_t(i)^2~.
\]
Moreover, for any fixed comparison arm $k$, we also have
\[
\log\frac{W_{T+1}}{W_1}
\ge
\log\frac{w_{T+1}(k)}{W_1} = -\eta\sum_{t=1}^T \hloss_t(k) - \log K~.
\]
Putting together,
\begin{align}
\label{eq:exp3-basic}
\sum_{t=1}^{T} \sum_{i=1}^K p_{t}(i) \hloss_t(i) - \sum_{t=1}^{T} \hloss_t(k)  
\le
\frac{\log K}{\eta} + \frac{\eta}{2} \sum_{t=1}^{T} \sum_{i=1}^K p_{t}(i) 
\hloss_t(i)^{2}~.
\end{align}
Next, note that
\begin{equation}
\label{eq:standard}
\E_t\Bigl[\hloss_t(i)\Bigr] = \loss_t(i)
\qquad\text{and}\qquad
\E_t\Bigl[\hloss_t(i)^2\Bigr] = \frac{\loss_t(i)^2}{p_{t}(i)}~.
\end{equation}
This immediately gives
\begin{align*}
\E\left[\sum_{t=1}^{T} \loss_t(I_t)\right] - \sum_{t=1}^{T} \loss_t(k)
\le
\frac{\log K}{\eta} + \frac{\eta}{2}\sum_{t=1}^T \sum_{i=1}^K \loss_t(i)^2
\end{align*}
concluding the proof.
\end{proof}

A simple graph is an unweighted, undirected graph, containing no self-loops or multiple edges. In the following, let $\bone = (1,\dots,1)$.
\begin{lemma}
\label{l:minor}
Let $G = (V,E)$ be a simple and connected graph with $|V|=K$ nodes, and let $L$ be its Laplacian matrix. Let $L(i,i)$ be the $(K-1)\times (K-1)$ submatrix obtained by deleting the $i$-th row and the $i$-th column from $L$. Then, for any $i=1,\dots,K$, the eigenvalues of $L(i,i)$ are the non-zero eigenvalues of $L$. 
\end{lemma}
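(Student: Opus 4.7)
The plan is to connect the spectrum of $L$ to that of its principal submatrix $L(i,i)$ through a characteristic-polynomial argument that exploits the zero-row-sum structure of the Laplacian. Since $G$ is simple and connected, $L$ is positive semi-definite with $\ker L = \mathrm{span}(\bone)$, so its spectrum is $\{0,\lambda_2,\ldots,\lambda_K\}$ with $\lambda_j>0$ for $j\geq 2$. The claim amounts to showing that the $K-1$ eigenvalues of the symmetric matrix $L(i,i)$ coincide, with multiplicity, with $\lambda_2,\ldots,\lambda_K$.

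First, I would verify that $L(i,i)$ is positive definite, so that its $K-1$ eigenvalues are strictly positive and at least match in count and sign the non-zero eigenvalues of $L$. Positive semi-definiteness is automatic, since a principal submatrix of a PSD matrix is PSD. Non-singularity then follows from Kirchhoff's matrix-tree theorem, which gives $\det L(i,i) = \tau(G)>0$ for connected $G$. Having $K-1$ strictly positive eigenvalues establishes the count.

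Next, to identify those eigenvalues with $\lambda_2,\dots,\lambda_K$, I would try to establish the characteristic-polynomial identity
\[
\det(\lambda I_K - L) \;=\; \lambda \cdot \det\bigl(\lambda I_{K-1} - L(i,i)\bigr).
\]
The motivating trick is the determinant-preserving row operation of adding all rows of $\lambda I_K - L$ into the $i$-th row. Because each row of $L$ sums to zero, each row of $\lambda I_K - L$ sums to $\lambda$, so after the operation the $i$-th row becomes $(\lambda,\lambda,\ldots,\lambda)$. A cofactor expansion along this row should then peel off a factor of $\lambda$, with the diagonal cofactor contributing $\det(\lambda I_{K-1}-L(i,i))$.

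The main obstacle I expect is that the cofactor expansion also produces off-diagonal $(K-1)\times(K-1)$ correction terms that do not obviously vanish. Resolving these is the technical heart of the argument: one approach is to follow up with the analogous determinant-preserving column operation (legitimate by symmetry of $L$) and argue that the resulting matrix decouples into a scalar $\lambda$ block and a $(K-1)\times(K-1)$ block equal to $\lambda I_{K-1}-L(i,i)$. A parallel route, which I would use as a sanity check, is to construct an explicit bijection between eigenvectors of $L$ in $\bone^{\perp}$ and eigenvectors of $L(i,i)$, using the orthogonality constraint $v_i=-\sum_{j\neq i}v_j$ to eliminate the $i$-th coordinate and then to re-express the action of $L$ on $\bone^{\perp}$ as the action of $L(i,i)$ on $\mathbb{R}^{K-1}$. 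Once either the polynomial identity or the eigenvector correspondence is secured, matching eigenvalues is immediate.
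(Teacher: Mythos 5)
The central identity you want to establish,
\[
\det\bigl(\lambda I_K - L\bigr) \;=\; \lambda\,\det\bigl(\lambda I_{K-1} - L(i,i)\bigr),
\]
does not hold, so the row-plus-column reduction cannot succeed. A concrete counterexample: take the path graph on three nodes; its Laplacian has characteristic polynomial $\lambda(\lambda-1)(\lambda-3)$, while deleting node $1$ gives $L(1,1)=\bigl(\begin{smallmatrix}2&-1\\-1&1\end{smallmatrix}\bigr)$ with characteristic polynomial $\lambda^2-3\lambda+1$ and eigenvalues $(3\pm\sqrt{5})/2$, which are not $\{1,3\}$. The off-diagonal cofactor terms you flag as the ``technical heart'' of the argument do not cancel; they carry precisely this discrepancy. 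Your backup route fares no better: if $L\bv=\lambda\bv$ with $\lambda>0$ and you form $\bv_i$ by subtracting $v_i$ from each coordinate and deleting coordinate $i$, then using the zero row sums of $L$ one computes $\bigl(L(i,i)\bv_i\bigr)_j = (L\bv)_j = \lambda v_j$, which equals $\lambda(\bv_i)_j=\lambda(v_j-v_i)$ only when $v_i=0$. So the natural restriction map does not send eigenvectors of $L$ orthogonal to $\bone$ to eigenvectors of $L(i,i)$.

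The gap is in the lemma itself, not merely your proof: as stated it is false. Cauchy interlacing only gives $0\le\lambda_{\min}\bigl(L(i,i)\bigr)\le\lambda_2(L)$, and for both the path graph and the triangle $K_3$ the inequality is strict. The paper's own proof runs into the same obstacle you would on the eigenvector route: it correctly establishes the quadratic-form identity $\bv^{\top}L\bv=\bv_i^{\top}L(i,i)\bv_i$, and then asserts --- as a non sequitur --- that $\bv_i$ is therefore an eigenvector of $L(i,i)$ with eigenvalue $\lambda$. The part of your argument that does survive is the positive-definiteness of $L(i,i)$ for connected $G$ via the matrix-tree theorem; that is correct and proves invertibility, but it cannot rescue the eigenvalue identity. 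Note also that the lemma is used downstream (in \eqref{eq:fiedler}) to evaluate a quadratic program as $1+C_t^2/\lambda_2(L_t)$; the correct value is $1+C_t^2/\lambda_{\min}\bigl(L_t(1,1)\bigr)$, which by interlacing may be strictly larger, so both the lemma and its application need repair.
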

\begin{proof}
$L$ has $K-1$ non-zero eigenvalues because it is connected. Let $\bv = (v_1,\dots,v_K)$ be an eigenvector of $L$ with eigenvalue $\lambda > 0$. Since $L\bone = \bzero$, we have that
\[
\lambda^2 = \bv^{\top}L\bv = \big(\bv - v_i\bone\big)^{\top}L\big(\bv - v_i\bone\big) = \bv_i^{\top}L(i,i)\bv_i
\]
where $\bv_i = \big(v_1-v_i,\dots,v_{i-1}-v_i,v_{i+1}-v_i,\dots,v_K-v_i\big)$. Hence $\bv_i$ is an eigenvector of $L(i,i)$ with eigenvalue $\lambda$.
\end{proof}
We are now ready to prove \thmref{th:upper}.
\begin{proof}[Proof of \thmref{th:upper}]
Using the invariance of the regret to translation of the losses and the fact that $\loss_t(i)+1-a_t\ge 0$ for all $t$, $i$, and $k_t$,
\begin{align}
\E\left[\sum_{t=1}^{T} \loss_t(I_t)\right] - \sum_{t=1}^{T} \loss_t(k)
&=
\E\left[\sum_{t=1}^{T} \tloss_t(I_t)\right] - \sum_{t=1}^{T} \tloss_t(k)
\notag\\ &\le
\frac{\log K}{\eta} + \frac{\eta}{2}\sum_{t=1}^T 
\norm{\tbloss_t}^2~.\label{eq:actreg}
\end{align}
Since $\bloss_t^{\top}L_t\bloss_t = \tbloss_t^{\top}L_t\tbloss_T$, and since 
$\tbloss_t$ has component $k_t$ equal to $1$, we can upper bound each term of 
the summation in the right-hand side by the solution of the convex program
\begin{align}
\begin{array}{ll}
    {\displaystyle \max_{\bloss\in\reals^K}\norm{\bloss}^2}
\\
    \text{such that} & \bloss^{\top}L_t\bloss \le C_t^2
\\                   & \exists i\in\{1,\dots,K\} \; \loss(i) = 1~.
\end{array}\label{eq:convprog}
\end{align}
Using Lemma~\ref{l:minor}, the above program is equivalent to
\[
\begin{array}{ll}
    {\displaystyle \max_{\bloss\in\reals^{K-1}}\Big(1 + \norm{\bloss}^2\Bigr)}
\\
    \text{such that} & \bloss^{\top}L_t(1,1)\bloss \le C_t^2
\end{array}
\]
where $L(1,1)$ is full rank. Hence we can set $\bu = L_t(1,1)^{1/2}\bloss/C_t$ 
and obtain the equivalent program
\begin{equation}
\label{eq:fiedler}
 1 + \max_{\bu\in\reals^{K-1} \,:\, \norm{\bu} \le 1} 
 C_t^2\big(\bu^{\top}L_t(1,1)^{-1}\bu\big) = 1 + \frac{C_t^2}{\lambda_2(L_t)}
\end{equation}
which gives us the claimed bound.
\end{proof}

\section{Proof of \thmref{thm:laplow}}
To prove the theorem, let $d=\lceil 1/\lambda\rceil$, and $k$ be any integer 
such that $d$ divides $k-1$. Finally, define $G_{k,d}$ to be an ``Octopus'' 
graph composed of $(k-1)/d$ tentacles of equals length $d$. Formally, for any 
two nodes $i,j\in \{1,\ldots,k\}$ where $j>i$ w.l.o.g., we have $(i,j)\in E$ if 
and only if
\[
((j=k)~~\text{and}~~(i=1~\text{mod}~d))~~~\text{or}~~~((j\neq k)~~\text{and}~~(j=i+1)~~\text{and}~~(i\neq 0~\text{mod}~d))
\]
Note that here, node $k$ is the ``central'' node, from which all tentacles emanate (first tentacle corresponding to nodes $1,2,\ldots,d$, second tentacle corresponding to nodes $d+1,d+2,\ldots,2d$ and so on). 

The theorem is a straightforward corollary of the following two lemmas.
\begin{lemma}
For an Octopus graph $G_{k,d}$ with Laplacian $L_{k,d}$, for any $C > 0$, and for any randomized algorithm, there exists an adversary strategy $\bloss_1,\bloss_2,\dots$ such that the expected regret is $\Omega\left(\min\left\{\sqrt{k},Cd\right\}\sqrt{T}\right)$ while $\bloss_t^{\top}L_{k,d}\bloss_t \le C$ for each $t=1,\dots,T$.
\end{lemma}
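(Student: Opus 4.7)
The plan is to combine two stochastic adversary constructions, each realizing a classical multi-armed bandit lower bound while respecting the per-round Laplacian constraint, and let the adversary use whichever gives the stronger bound for the given $(C,d,k,T)$. To keep $\bloss_t^{\top}L_{k,d}\bloss_t$ bounded \emph{deterministically} (not merely in expectation) while still having stochastic losses, I would take $\loss_t(i)=\bar{\loss}(i)+Z_t$, where $Z_t$ is a scalar common noise drawn i.i.d.\ from a continuously distributed, bounded distribution (e.g.\ a symmetric truncated Gaussian on $[-\tfrac14,\tfrac14]$). Since $Z_t$ is shared across arms, every pairwise difference $\loss_t(i)-\loss_t(j)$ equals the deterministic $\bar{\loss}(i)-\bar{\loss}(j)$, so the Laplacian form reduces to $\bar{\bloss}^{\top}L_{k,d}\bar{\bloss}$; enforcing the constraint then becomes a constraint on the deterministic mean vector $\bar{\bloss}$, while from the learner's perspective the observations behave as a standard stochastic bandit with $\Theta(1)$ noise variance, to which the classical change-of-measure/Pinsker machinery applies.

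The first construction, aimed at $Cd\geq\sqrt{k}$, draws $i^\star$ uniformly at random from the non-central nodes $\{1,\dots,k-1\}$ and sets $\bar{\loss}(i^\star)=\tfrac12-\Delta$ and $\bar{\loss}(i)=\tfrac12$ otherwise. Every non-central node has degree at most two in $G_{k,d}$, so the Laplacian form is at most $2\Delta^2$ and $\Delta\le\sqrt{C/2}$ is admissible. The classical $k$-arm bandit lower bound (in the style used in the proof of \thmref{thm:epslowbound}, or \cite{bubeck2012regret}) then gives regret $\Omega\!\bigl(\min(\sqrt{C}\,T,\sqrt{kT})\bigr)$, which yields $\Omega(\sqrt{kT})$ once $T\gtrsim k/C$; this condition is implied by $Cd\geq\sqrt{k}$ and $T\gtrsim d\sqrt{k}$.

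The second construction, aimed at $Cd<\sqrt{k}$, draws a tentacle index $s^\star$ uniformly from $\{1,\dots,m\}$ with $m=(k-1)/d$, and sets $\bar{\loss}$ to interpolate linearly from $\tfrac12$ at the central node down to $\tfrac12-\Delta$ at the tip of tentacle $s^\star$ (so position $j$ has mean $\tfrac12-j\Delta/d$), while $\bar{\loss}\equiv\tfrac12$ on every other tentacle. Exactly $d$ consecutive edges of $s^\star$ carry a difference of $\Delta/d$, so the Laplacian form is $\Delta^2/d$ and $\Delta\le\sqrt{Cd}$ is admissible. The best arm is the tip of $s^\star$; an intermediate position $j$ of the correct tentacle conveys mean signal only $\Delta\,j/d$ for identifying $s^\star$ while still costing $\Omega(\Delta)$ regret per pull in a wrong tentacle, so any strategy is dominated by one that only plays tips. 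This reduces the problem to an $m$-armed bandit with gap $\Delta$, yielding $\Omega\!\bigl(\min(\sqrt{Cd}\,T,\sqrt{kT/d})\bigr)$. A short case analysis over the regimes $Cd\geq\sqrt{k}$ versus $Cd<\sqrt{k}$, and within the latter over the crossover $T=k/(Cd^2)$, then shows that the maximum of the two constructions' bounds is at least $\Omega\!\bigl(\min\{\sqrt{k},Cd\}\sqrt{T}\bigr)$ for $T$ sufficiently large (roughly $T\gtrsim d\sqrt{k}$ suffices).

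The main obstacle is to simultaneously (i) retain enough stochasticity for a bandit lower bound to bite and (ii) enforce the Laplacian constraint on every realization. Bernoulli or independent per-arm noise generically violates (ii), since individual edge contributions $(\loss_t(i)-\loss_t(j))^2$ can be $\Theta(1)$, whereas purely deterministic losses violate (i) because a single sample reveals $\bar{\loss}(I_t)$. The common-noise trick $\bloss_t=\bar{\bloss}+Z_t\bone$ threads this needle, but forces $Z_t$ to be continuously distributed: a discrete $Z_t$ (say $\pm\tfrac14$) would let the learner back out $\bar{\loss}(I_t)$ exactly from one observation, collapsing the bound. The secondary care point, in the second construction, is a standard signal-to-regret comparison showing that intermediate-position plays cannot outperform tip plays.
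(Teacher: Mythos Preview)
Your route and the paper's differ in a way that matters for the lemma's intended role. The paper uses a single construction: keep the center at a fixed loss of $\tfrac12$ every round, ramp each tentacle outward in equal increments so that the $\Theta(k)$ nodes in the outer halves can carry any common value in an interval of half-width $\epsilon=\min\{1,Cd/\sqrt{k}\}$ while the Laplacian budget is respected on every realization, and then assign all those outer nodes the \emph{same} fresh random value each round except one designated node which is $\Theta(\sqrt{k/T})$ smaller. This is directly a $\Theta(k)$-arm bandit with effective loss range $\epsilon$, giving $\Omega(\epsilon\sqrt{kT})=\Omega(\min\{\sqrt{k},Cd\}\sqrt{T})$ in one step, without any case split on $Cd$ versus $\sqrt{k}$.

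The essential point your proposal misses is \emph{why} the paper puts the randomness on the outer nodes while holding the center fixed. The lemma supplies the lower bound for \thmref{thm:laplow}, which concerns the anchor-point model of \secref{sec:laplace}: after each round the learner is told $a_t=\ell_t(k_t)$ for some arm $k_t$. With the center fixed at $\tfrac12$, the adversary can announce $a_t=\tfrac12$ and the construction is untouched, because the observed outer-node loss minus $\tfrac12$ is still a random quantity whose distribution differs across nodes only by the hidden $\Theta(\sqrt{k/T})$ shift. Your common-noise device $\bloss_t=\bar{\bloss}+Z_t\bone$ is destroyed by an anchor: if the center (or any fixed arm) serves as $k_t$, then $a_t=\bar{\ell}(k_t)+Z_t$ reveals $Z_t$ exactly, after which the learner observes $\ell_t(I_t)-Z_t=\bar{\ell}(I_t)$ noiselessly and identifies $i^\star$ (or $s^\star$) in $O(k)$ rounds. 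Since the paper's own sketch explicitly names the center as the anchor, and since a lower bound that ignores anchors says nothing about the tightness of the anchor-based upper bounds in \thmref{th:upper} and Corollary~\ref{cor:upper}, this is a genuine gap in your argument. (If one reads the lemma as a pure bandit statement without anchors, your Construction~1 by itself already delivers the bound for sufficiently large $T$, and Construction~2 together with its case analysis is redundant---but that reading sidesteps the purpose of the result.)
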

\begin{proof}[Proof Sketch]
In the graph, there are $\Omega(k)$ points at a distance $\Omega(d)$ from the center. These ``faraway'' points can get loss magnitudes as large as $\frac{1}{2}\pm \min\{1,Cd/\sqrt{k}\}$, while satisfying the budget constraints and having the central point as an anchor with a fixed loss of $1/2$  (to make sure budget constraint is satisfied, assign losses in increments of roughly $C/\sqrt{k}$ along each tentacle, so points in the faraway half of each points gets losses varying as $\pm\min\{1,Cd/\sqrt{k}\}$). Specifically, all those faraway points will get the same random Gaussian loss every round (equalling $1/2$ in expectation), except for one point whose loss will always be a $\Theta(\sqrt{k/T})$ smaller. Therefore, the regret lower bound is order of 
\[
\Omega\left(\min\left\{1,\frac{Cd}{\sqrt{k}}\right\}\sqrt{kT}\right) = \Omega\left(\min\left\{\sqrt{k},Cd\right\}\sqrt{T}\right).
\]
\end{proof}
\begin{lemma}
For an Octopus graph $G_{k,d}$, $\lambda_2=\Theta(1/d^2)$ (where $\Theta(\cdot)$ hides universal constants).
\end{lemma}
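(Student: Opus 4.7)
The plan is to bound $\lambda_2(L_{k,d})$ from above and below by $\Theta(1/d^2)$ using the Rayleigh--Ritz characterization $\lambda_2 = \min_{v \perp \bone,\,v\neq \bzero} v^\top L_{k,d} v / \norm{v}^2$.

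For the \emph{upper bound} $\lambda_2 \le O(1/d^2)$, I would exhibit a short test vector. Pick two distinct tentacles $A,B$ (the case of a single tentacle reduces to a path and is standard). Label the nodes of $A$ as $1,\dots,d$ going outward from the central node, and similarly for $B$. Define $v$ by setting $v_j = j$ on the $j$-th node of $A$, $v_j = -j$ on the $j$-th node of $B$, and $v = 0$ on all remaining nodes (including the center). Then $\bone^\top v = 0$; the Dirichlet energy is $v^\top L_{k,d} v = 2d$, since each of the $d$ edges along each of the two tentacles contributes $1$ (the center-to-first-node edge included, because the center value is $0$) and all other edges see equal endpoints; and $\norm{v}^2 = 2\sum_{j=1}^{d} j^2 = \Theta(d^3)$. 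The Rayleigh quotient is therefore $\Theta(1/d^2)$.

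For the \emph{lower bound} $\lambda_2 \ge \Omega(1/d^2)$, I would reduce to a discrete Hardy--type inequality on a path. Fix any $v$ with $\bone^\top v = 0$ and $\norm{v} = 1$, let $v_c$ be its value at the central node, and set $u = v - v_c \bone$. Since $L_{k,d}\bone = \bzero$, we have $u^\top L_{k,d} u = v^\top L_{k,d} v$; since $\bone^\top v = 0$, direct expansion gives $\norm{u}^2 = \norm{v}^2 + k v_c^2 \ge 1$; and crucially $u_c = 0$. The quadratic form $u^\top L_{k,d} u$ decomposes as a sum over tentacles of $\sum_{i=1}^{d}(u_i - u_{i-1})^2$, where the index $i=0$ refers to the center (so $u_0 = 0$). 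On each tentacle, Cauchy--Schwarz yields
\[
u_j^2 \;=\; \left(\sum_{i=1}^{j}(u_i - u_{i-1})\right)^{\!2} \;\le\; j\sum_{i=1}^{d}(u_i-u_{i-1})^2,
\]
and summing over $j=1,\dots,d$ gives $\sum_{j=1}^{d} u_j^2 \le d^2 \sum_{i=1}^{d}(u_i - u_{i-1})^2$. Summing this inequality across all tentacles gives $v^\top L_{k,d} v = u^\top L_{k,d} u \ge \norm{u}^2/d^2 \ge 1/d^2$, as desired.

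The main technical step is the shift $u = v - v_c \bone$, which converts the global orthogonality constraint $v\perp\bone$ into a \emph{local boundary condition} $u_c=0$ that decouples the quadratic form tentacle-by-tentacle. Without this trick the elementary Hardy--type Cauchy--Schwarz bound would not apply directly, because the free-endpoint path Laplacian has the constant vector in its kernel. Once the decoupling is achieved, the remaining steps are routine, and I do not anticipate further obstacles.
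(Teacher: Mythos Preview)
Your proof is correct and takes a genuinely different route from the paper. The paper does not work with the Rayleigh quotient directly at all: for the lower bound it invokes a theorem from \cite{rad2011lower} stating that $\lambda_2 \ge k/C_{\max}$, where $C_{\max}$ is the maximum over edges $e$ of the sum of shortest-path lengths between node pairs whose path uses $e$; a quick counting argument gives $C_{\max} \le 2kd^2$. For the upper bound it cites a tree eigenvalue bound from \cite{grone1990laplacian}, $\lambda_2 \le 2\bigl(1-\cos(\pi/(D+1))\bigr)$ for a tree of diameter $D$, and plugs in $D=2d$.

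Your argument is self-contained and arguably more transparent: the test vector $v$ on two opposite tentacles directly exhibits the $\Theta(1/d^2)$ quotient, and the shift $u = v - v_c\bone$ reducing to a per-tentacle Hardy/Cauchy--Schwarz inequality is exactly the right elementary substitute for the cited spectral bounds. The paper's approach is shorter if one is willing to quote the literature, but yours requires no external input and makes the $d^2$ scaling visible from first principles. Both yield the same constants up to bounded factors (your lower bound actually gives $\lambda_2 \ge 2/\bigl(d(d+1)\bigr)$, slightly sharper than the paper's $1/(2d^2)$).
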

\begin{proof}
We begin with the lower bound. By~\cite[Theorem~1]{rad2011lower}, $\lambda_2$ is lower bounded by $k/C_{\max}$, where $C_{\max}=\max_{e\in E}C_e$, and $C_e$ is the sum, over all pairs of distinct nodes $i$ and $j$ of the length of the shortest path between $(i,j)$ passing through $e$ assuming this path exists. For the graph as defined above, any edge separates at most $d$ nodes from at most $k-1$ other nodes, and the length of the path between any two nodes is at most $2d$. Therefore, $C_{\max}\leq 2kd^2$, so
\[
\lambda_2\geq \frac{k}{2kd^2} = \frac{1}{2d^2} = \Omega\left(\frac{1}{d^2}\right).
\]

Turning to the upper bound, by corollary 4.4 in \cite{grone1990laplacian}, for any tree graph of diameter $D$, 
\[
\lambda_2 \leq 2\left(1-\cos(\pi/(D+1))\right),
\]
and since $\cos(x)\geq 1-x^2/2$ for all $x$, this implies
\[
\lambda_2 \leq \frac{\pi^2}{(D+1)^2}.
\]
An Octopus graph $G_{k,d}$ is a tree with diameter $2d$, hence
\[
\lambda_2 \leq \frac{\pi^2}{(2d+1)^2} = \Ocal\left(\frac{1}{d^2}\right),
\]
from which the result follows.
\end{proof}

\section{Proof of \lemref{l:mcc}}
For simplicity, we will drop the $s,t$ subscripts, as they play no role here. 
The proof follows by an analysis similar to that of \eqref{eq:convprog}, where 
$1$ is replaced by $m_{t}(s)$ and noting that the $2$-norm upper bounds the 
$\infty$-norm. Specifically, making the worst-case assumption that the 
adversary budget $C_t$ is spent solely on the connected component we are 
concerned with, we need to solve the convex program
\[
\begin{array}{ll}
{\displaystyle \max_{\bloss\in\reals^K}\norm{\bloss-m\mathbf{1}}_\infty}
\\
\text{such that} & \bloss^{\top}L\bloss \le C^2
\\                   & \exists i\in\{1,\dots,K\} \; \loss(i) = m~.
\end{array}
\]
which is equivalent (using the fact that $\bloss^\top L\bloss$ is invariant to 
shifting the coordinates of $\bloss$) to
\[
\begin{array}{ll}
{\displaystyle \max_{\bloss\in\reals^K}\norm{\bloss}_\infty}
\\
\text{such that} & \bloss^{\top}L\bloss \le C^2
\\                   & \exists i\in\{1,\dots,K\} \; \loss(i) = 0~.
\end{array}
\]	
Upper bounding the $\infty$-norm by the $2$-norm, and using 
Lemma~\ref{l:minor}, the above program is equivalent to
\[
\begin{array}{ll}
{\displaystyle \max_{\bloss\in\reals^{K-1}} \norm{\bloss}_2}
\\
\text{such that} & \bloss^{\top}L(1,1)\bloss \le C^2
\end{array}
\]
where $L(1,1)$ is full rank. Hence we can set $\bu = L(1,1)^{1/2}\bloss/C$ and 
obtain the equivalent program
\[
\max_{\bu\in\reals^{K-1} \,:\, \norm{\bu} \le 1} 
C\sqrt{\big(\bu^{\top}L(1,1)^{-1}\bu\big)} = \frac{C}{\sqrt{\lambda_2(L)}}
\]
which gives us the claimed bound.	

\section{Optimal tuning of $\eta$ in Theorem~\ref{th:upper}}
In this section, we show that using Exp3 with the the doubling trick we obtain 
an expected regret scaling as $\sqrt{(\log K)\sum_t\norm{\tilde{\ell}_t}^2}$. 
As all our results depend on upper bounding $\norm{\tilde{\ell}_t}^2$, the same 
bounds will hold for \thmref{th:upper}.

We apply the doubling trick as follows. Let $\eta_r = \sqrt{(2\log K)/2^r}$ for 
each $r=r_0,r_0+1,\dots$ where $r_0 = \big\lceil \log_2\log K + 1\big\rceil$ is 
chosen so that $\eta_r \le 1$ for all $r \ge r_0$. Let $T_r$ the random set of 
consecutive time steps when the same $\eta_r$ was used. Exp3 starts using $\eta 
= \eta_{r_0}$ and monitors the observable random quantity
\[
Q_s = \sum_{i=1}^K p_s(i) \hloss_s(i)^{2}~.
\]
Whenever $\sum_{t \in T_r} Q_t > 2^r$ is detected while Exp3 is running with 
$\eta = \eta_r$, Exp3 is restarted with $\eta = \eta_{r+1}$.
\begin{corollary}
	If Exp3 is run with the above doubling trick, then its regret satisfies
	\begin{align*}
	\E\left[\sum_{t=1}^{T} \sum_{i=1}^K p_{t}(i) \hloss_t(i) \right] &- 
	\sum_{t=1}^{T}\hloss_t(k)
	\\ &\le
	\big\lceil\log_2\big(T+4\log K\big)\big\rceil + 5\sqrt{2(\log 
	K)\left(\sum_{t=1}^T\norm{\bloss_t}^2 + 4\log K\right)}~.
	\end{align*}
\end{corollary}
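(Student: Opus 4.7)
The plan is to apply the Exp3 regret analysis separately on each epoch of constant step size, sum the resulting per-epoch bounds via a geometric series, and then take expectations. Let $r^*$ denote the index of the final epoch used, let $T_r$ be the (random) set of rounds of epoch $r$, and write $S_r=\sum_{t\in T_r}Q_t$. Inside each epoch the step size $\eta_r=\sqrt{(2\log K)/2^r}$ is fixed, so the pre-expectation inequality \eqref{eq:exp3-basic} applies verbatim on every realization and gives, for any fixed comparator $k$,
\[
\sum_{t\in T_r}\Bigl(\sum_{i}p_t(i)\hloss_t(i)-\hloss_t(k)\Bigr)\;\le\;\frac{\log K}{\eta_r}+\frac{\eta_r}{2}\,S_r.
\]
Substituting $\eta_r$ turns the first summand into $\sqrt{2^{r-1}\log K}$, and once combined with the trigger-based estimate of $S_r$ below, each per-epoch regret is a constant multiple of $\sqrt{2^r\log K}$.

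Next I would exploit the doubling trigger. By construction the monitored sum stays below $2^r$ until the round $\tau_r$ that fires the restart, so $S_r\le 2^r+Q_{\tau_r}$ for every non-terminal epoch and $S_{r^*}\le 2^{r^*}$ for the last. The same trigger also produces the telescope
\[
2^{r^*}-2^{r_0}\;=\;\sum_{r=r_0}^{r^*-1}2^r\;\le\;\sum_{r=r_0}^{r^*-1}S_r\;\le\;\sum_{t=1}^{T}Q_t~,
\]
and since $r_0=\lceil\log_2\log K+1\rceil$ gives $2^{r_0}\le 4\log K$, it follows that $2^{r^*}\le \sum_t Q_t+4\log K$. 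Summing the per-epoch bounds yields a geometric series with ratio $\sqrt 2$, hence dominated up to the factor $2/(\sqrt 2-1)\le 5$ by its last term $\sqrt{2^{r^*}\log K}$; chaining with $2^{r^*}\le\sum_tQ_t+4\log K$ produces the deterministic estimate
\[
\sum_{t=1}^{T}\Bigl(\sum_{i}p_t(i)\hloss_t(i)-\hloss_t(k)\Bigr)\;\le\;\bigl\lceil\log_2(T+4\log K)\bigr\rceil+5\sqrt{2(\log K)\bigl(\textstyle\sum_tQ_t+4\log K\bigr)}~,
\]
where the additive term absorbs the count of restarts together with the per-epoch overshoots $Q_{\tau_r}$.

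Finally I would take expectation on both sides. Identity \eqref{eq:standard} gives $\E[Q_t\mid\mathcal{F}_{t-1}]=\sum_{i}\loss_t(i)^2=\norm{\bloss_t}^2$, hence $\E[\sum_tQ_t]=\sum_t\norm{\bloss_t}^2$; Jensen's inequality applied to the concave map $x\mapsto\sqrt{x+4\log K}$ then replaces the inner $\sum_tQ_t$ by $\sum_t\norm{\bloss_t}^2$ and yields the claimed bound. The main obstacle I foresee is controlling the per-epoch overshoot $S_r-2^r$: because in Exp3 the single-round quantity $Q_t=\loss_t(I_t)^2/p_t(I_t)$ is not uniformly bounded (small $p_t(I_t)$ can inflate it), one cannot simply assert $S_r\le 2^{r+1}$ pointwise. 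The argument must therefore carefully verify that the cumulative effect of these overshoots is swept into the additive $\lceil\log_2(T+4\log K)\rceil$ term rather than inflating the leading $\sqrt{\cdot}$ factor beyond $5\sqrt 2$.
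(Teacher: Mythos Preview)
Your outline matches the paper's proof almost exactly: per-epoch application of \eqref{eq:exp3-basic}, geometric summation over epochs, the trigger-based bound $2^{r^*}\le \bar Q_T+4\log K$, and Jensen at the end. The paper's write-up is terser but structurally identical.

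The one genuine gap is precisely the obstacle you flag at the end, and it has a clean resolution that you have not identified. You are right that neither $Q_{\tau_r}$ nor $\tfrac{\eta_r}{2}Q_{\tau_r}$ is bounded pointwise, so you cannot control the overshoot through the second-order term of \eqref{eq:exp3-basic}. The fix is to bypass \eqref{eq:exp3-basic} on the overshoot round altogether. Since $\hloss_t(i)=\tfrac{\loss_t(i)}{p_t(i)}\mathbf{1}_{I_t=i}$, one has the deterministic identity
\[
\sum_{i=1}^K p_t(i)\hloss_t(i)=\loss_t(I_t)\le 1
\quad\text{and}\quad
\hloss_t(k)\ge 0,
\]
so $\sum_i p_t(i)\hloss_t(i)-\hloss_t(k)\le 1$ on \emph{every} round. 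Thus in each epoch $r$ you apply \eqref{eq:exp3-basic} only to the rounds strictly before $\tau_r$ (where the monitored sum is still $\le 2^r$), obtaining $\sqrt{2(\log K)2^r}$, and bound the contribution of round $\tau_r$ itself by $1$. The total of these $1$'s is the number of restarts, which your own estimate $r^*\le\log_2(\bar Q_T+4\log K)$ already controls; this is exactly what the paper means by ``a regret of at most $1$ is incurred whenever Exp3 is restarted.'' The overshoots therefore land entirely in the additive logarithmic term and never touch the $5\sqrt{\cdot}$ factor.

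A minor bookkeeping point: in your displayed ``deterministic estimate'' you already wrote $\lceil\log_2(T+4\log K)\rceil$ in the additive term while keeping $\sum_t Q_t$ inside the square root. Before taking expectations the additive term should still read $\lceil\log_2(\sum_t Q_t+4\log K)\rceil$; only after Jensen (applied to the concave $\log_2$ as well as to the square root) do you replace $\sum_t Q_t$ by its mean $\sum_t\norm{\bloss_t}^2$.
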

\begin{proof}
	Let $\bar{Q}_t = Q_1+\cdots+Q_s$. The largest $r$ we need is the smallest 
	$R$ such that
	\[
	\sum_{r=r_0}^R 2^r \ge \bar{Q}_T
	\]
	and so $R = \big\lfloor\log_2(\bar{Q}_T + 4\log K)\big\rfloor$. Therefore
	\[
	\sum_{r=r_0}^R 2^{r/2} < 5\sqrt{\bar{Q}_T+4\log K}~.
	\]
	Because of \eqref{eq:exp3-basic},
	\[
	\sum_{t=1}^{T} \left( \sum_{i=1}^K p_{t}(i) \hloss_t(i) - \hloss_t(k) 
	\right)  
	\le
	\frac{\log K}{\eta} + \frac{\eta}{2} \sum_{t=1}^{T} \sum_{i=1}^K p_{t}(i) 
	\hloss_t(i)^{2}
	\]
	and so
	\[
	\sum_{t \in S_r} \left( \sum_{i=1}^K p_{t}(i) \hloss_t(i) - \hloss_t(k) 
	\right)
	\le
	\frac{\log K}{\eta_r} + \frac{\eta_r}{2} \sum_{t \in S_r} Q_t
	\le
	\sqrt{2(\log K)2^r}~.
	\]
	Since a regret of at most $1$ is incurred whenever Exp3 is restarted, we 
	have
	\begin{align*}
	\E\left[\sum_{t=1}^{T} \sum_{i=1}^K p_{t}(i) \hloss_t(i) \right] &- 
	\sum_{t=1}^{T}\hloss_t(k)
	\\ &\le
	\E\Big[\big\lceil\log_2\big(\bar{Q}_T+4\log K\big)\big\rceil\Big] + 
	5\,\E\left[\sqrt{2(\log K)\big(\bar{Q}_T+4\log K\big)}\right]
	\\&\le
	\big\lceil\log_2\big(T+4\log K\big)\big\rceil + 5\sqrt{2(\log 
	K)\left(\sum_{t=1}^T\norm{\bloss_t}^2 + 4\log K\right)}
	\end{align*}
	where in the last step we used Jensen's inequality and \eqref{eq:standard}.
\end{proof}

\end{document}